\documentclass{article}

\usepackage[final]{cpal_2024}



\usepackage{lineno}
\usepackage{graphicx}
\usepackage{amsmath}
\usepackage{amsthm}
\usepackage{booktabs}
\usepackage{amssymb}
\usepackage{mathtools}
\usepackage{cases}
\usepackage{array}
\usepackage{siunitx}
\usepackage{subfigure}
\usepackage{hyperref}
\usepackage{xcolor}
\usepackage{xspace}
\usepackage{longtable}
\usepackage{marvosym}
\usepackage{enumerate}
\usepackage{forest}
\usepackage{subeqnarray}
\usepackage{multirow}
\usepackage{color}
\usepackage{newfloat}
\usepackage{listings}
\usepackage{wrapfig}

\usepackage{mathrsfs}
\usepackage{amsfonts}
\usepackage[numbers]{natbib}

\theoremstyle{plain}
\newtheorem{theorem}{Theorem}[section]
\newtheorem{proposition}[theorem]{Proposition}

\theoremstyle{definition}

\theoremstyle{remark}

\usepackage{enumitem}
\modulolinenumbers[5]

\newcommand{\equationname}{Eq.}

\usepackage[textsize=tiny]{todonotes}
\newcommand{\method}{FIXED\xspace}
\newcommand{\methodb}{FIX\xspace}

\title{FIXED: Frustratingly Easy Domain Generalization with Mixup}

\author{%
  Wang Lu\textsuperscript{1}, ~Jindong Wang\textsuperscript{2}\thanks{Corresponding author.}, ~Han Yu\textsuperscript{3}, ~Lei Huang\textsuperscript{2}, ~Xiang Zhang\textsuperscript{4}, ~Yiqiang Chen\textsuperscript{5}, ~Xing Xie\textsuperscript{2} \\
  \textsuperscript{1}Tsinghua University \quad \textsuperscript{2}Microsoft \quad \textsuperscript{3}Nanyang Technological University\\ \textsuperscript{4} University of North Carolina \quad \textsuperscript{5} University of Chinese Academy of Sciences\\
  luw12thu@sina.com.cn, jindong.wang@microsoft.com
}

\begin{document}

\maketitle

\begin{abstract}
Domain generalization (DG) aims to learn a generalizable model from multiple training domains such that it can perform well on unseen target domains.
A popular strategy is to augment training data to benefit generalization through methods such as Mixup~\cite{zhang2018mixup}.
While the vanilla Mixup can be directly applied, theoretical and empirical investigations uncover several shortcomings that limit its performance.
Firstly, Mixup cannot effectively identify the domain and class information that can be used for learning invariant representations. 
Secondly, Mixup may introduce synthetic noisy data points via random interpolation, which lowers its discrimination capability. 
Based on the analysis, we propose a simple yet effective enhancement for Mixup-based DG, namely domain-invariant Feature mIXup (FIX).
It learns domain-invariant representations for Mixup.
To further enhance discrimination, we leverage existing techniques to enlarge margins among classes to further propose the domain-invariant Feature MIXup with Enhanced Discrimination (FIXED) approach.
We present theoretical insights about guarantees on its effectiveness.
Extensive experiments on seven public datasets across two modalities including image classification (Digits-DG, PACS, Office-Home) and time series (DSADS, PAMAP2, UCI-HAR, and USC-HAD) demonstrate that our approach significantly outperforms nine state-of-the-art related methods, beating the best performing baseline by 6.5\% on average in terms of test accuracy. The code is available at \url{https://github.com/jindongwang/transferlearning/tree/master/code/deep/fixed}. 
\end{abstract}

\section{Introduction}

In recent years, deep learning has demonstrated useful capabilities and potential in many application domains~\cite{rahman2023auto,wan2022fairness}.
However, the performance of deep neural nets often deteriorates significantly when deployed on test data that exhibit different distributions from the training data.
This is widely recognized as the \emph{domain shift} problem~\cite{quinonero2009dataset}.
For instance, activity recognition models trained on the data from adults are likely to fail when being tested on children's activities, and the performance of natural image classification models tend to perform poorly when tested on artistic paintings.

A common technique to address the problem is domain adaptation (DA)~\cite{ben2007analysis,ganin2016domain,daume2006domain}. It learns to maximize model performance on a given target domain with the help of labeled source domains by bridging the distribution gap.
However, DA relies on target domains, which makes DA less applicable in real-world scenarios that demand good generalization performance on \emph{unseen} distributions.
Domain generalization (DG)~\cite{muandet2013domain,li2018learning,wang2021generalizing} is attracting increasing attention in recent years.
DG aims to learn a generalizable model that can perform well on unseen distributions after being trained on multiple source domains.
Existing work can be categorized into three types: 1) learn domain-invariant representations~\cite{ganin2016domain,li2018domain1}, 2) meta-learning~\cite{li2018learning,balaji2018metareg,bui2021exploiting}, and 3) data augmentation-based DG~\cite{zhou2021domain,huang2021fsdr}.

In this paper, we focus on data augmentation, specifically Mixup~\cite{zhang2018mixup} which is a simple but effective approach.
Mixup generates new samples via linear interpolations between any two pairs of data.
It increases the quantity and diversity of training data to boost the generalization of deep nets~\cite{zhang2020does}.
Mixup can be used for domain generalization directly~\cite{wang2020heterogeneous}. Recent works, such as FACT~\cite{xu2021fourier} and MixStyle~\cite{zhou2021domain}, have adapted it in computer vision tasks with application-specific knowledge.
Despite the success of Mixup, an important research question remains open: \emph{ Is there any versatile Mixup learning strategy for general domain generalization problems?} 

Our specific interest is to enhance the capability of Mixup for general domain generalization based on theoretical and empirical analysis of its current limitations.
First, we notice that vanilla Mixup cannot discern domain information and class information, which can negatively affect its performance due to the entangled domain-class knowledge.
Second, Mixup in DG can easily generate synthetic noisy data points when two classes are close to each other. This reduces the discrimination of the classifier.
We propose the domain-invariant Feature MIXup with Enhanced Discrimination (\emph{\method}) approach, to address these limitations of Mixup. 
It incorporates domain-invariant representation learning into Mixup, which enables the diverse data augmentation with useful information for the generalized model.
Then, \method introduces a large margin to reduce synthetic noisy data points in the interpolation process. It is a simple yet effective approach.

Through theoretical analysis, we present insights on the design rationale and superiority of \method.
Note that our \method is not limited to specific applications and can be applied to general classification tasks, in contrast to existing Mixup methods which are designed for computer vision tasks (e.g.,~\cite{zhou2021domain,xu2021fourier}). We have conducted extensive experiments on seven benchmarks across two modalities: 1) image classification (image data) and 2) sensor-based human activity recognition (time series data). The results demonstrate significant superiority of \method over nine state-of-the-art approaches, outperforming the best baseline by \textbf{6.5\%} in terms of average test accuracy on the domain time series generalization task which is still in its infancy.

To summarize, our contributions are three-fold:
\begin{itemize}
\setlength\itemsep{0em}
    \item Simple yet effective algorithm: For DG, we propose \methodb to enhance the diversity of useful information and implement \method by introducing the large margin to reduce synthetic noisy data during Mixup. \method remains quite simple but effective.
    \item New theoretical insights: We offer theoretical insights from both the cover range and class distance perspectives to explain the rationale behind our algorithm.
    \item Good Performance: We conduct comprehensive experiments on seven benchmarks across two modalities: image classification (image) and sensor-based human activity recognition (time series). Experimental results demonstrate the superiority of \method, especially with \textbf{6.5\%} improvements for domain generalization in time series which is still in infancy.
\end{itemize}


\section{Preliminaries}


We follow the definition in \cite{wang2021generalizing}.
In domain generalization, we are given $M$ labeled source domains $\mathcal{S} = \{ \mathcal{S}^i | i = 1,\cdots, M \}$ and $\mathcal{S}^i = \{(\mathbf{x}_j^i, y_j^i)\}_{j=1}^{n_i}$ denotes the $i^{th}$ domain, where $n_i$ denotes the number of data in $\mathcal{S}^i$. 
The joint distributions between each pair of domains are different and denoted as $\mathbb{P}^i_{XY} \neq \mathbb{P}^j_{XY}, 1 \leq i \neq j \leq M$.
The goal of DG is to learn a robust and generalized predictive function $h: \mathcal{X} \rightarrow \mathcal{Y}$ from the $M$ training sources to achieve minimum prediction error on an unseen test domain $\mathcal{S}_{test}$ with an unknown joint distribution (i.e., $\min_{h} \mathbb{E}_{(\mathbf{x},y)\in \mathcal{S}_{test}} [\ell(h(\mathbf{x}),y)]$). $\mathbb{E}$ is the expectation and $\ell(\cdot, \cdot)$ is the loss function.
All domains, including the source domains and the unseen target domains, have the same input and output spaces (i.e., $\mathcal{X}^1 = \mathcal{X}^2 =\cdots = \mathcal{X}^M = \mathcal{X}^T \in \mathbb{R}^{m}$). $\mathcal{X}$ is the $m$-dimensional instance space and $\mathcal{Y}^1 = \mathcal{Y}^2 =\cdots =\mathcal{Y}^M = \mathcal{Y}^T=\{1,2,\cdots, K \}$. $\mathcal{Y}$ is the label space and $K$ is the number of classes.

\subsection{Background}
Data augmentation is a common technique to cope with DG.
Among existing methods, Mixup~\cite{zhang2018mixup} is a popular data augmentation approach and has shown good performance in many fields. 
It constructs synthetic training samples based on two random data points:
\begin{equation}
	\label{eqa:mix}
	\begin{aligned}
	    \lambda\sim Beta(\alpha, \alpha),
		\tilde{\mathbf{x}} =\lambda \mathbf{x}_i + (1-\lambda) \mathbf{x}_j,
		\tilde{y} =  \lambda y_i + (1-\lambda)y_j,
	\end{aligned}
\end{equation}
where $Beta(\alpha, \alpha)$ is the Beta distribution and $\alpha \in (0,\infty)$ is a hyperparameter that controls the strength of interpolation between feature-target pairs, recovering the ERM principle as $\alpha \rightarrow 0$. 
Mixup extends the training distribution by incorporating the intuition that linear interpolations of feature vectors should lead to linear interpolations of the associated targets into the training set. 
As a powerful data augmentation technique, Mixup has played a vital role to enhance sample diversity in domain generalization problems~\cite{wang2020heterogeneous,xu2021fourier,zhou2021domain}.

\begin{figure}[htbp]
	\centering
	\subfigure[Vanilla Mixup]{
		\label{fig:demo1_1}
		\includegraphics[height=0.16\columnwidth]{./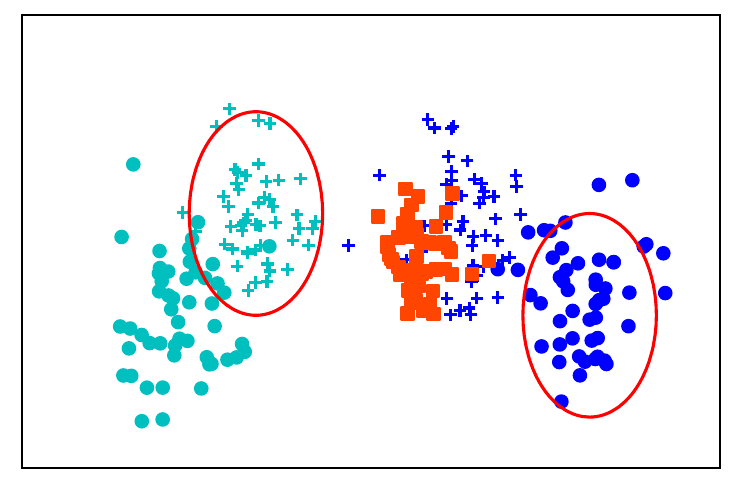}
	}
	\subfigure[Domain-inv. Mixup]{
		\label{fig:demo1_2}
		\includegraphics[height=0.16\columnwidth]{./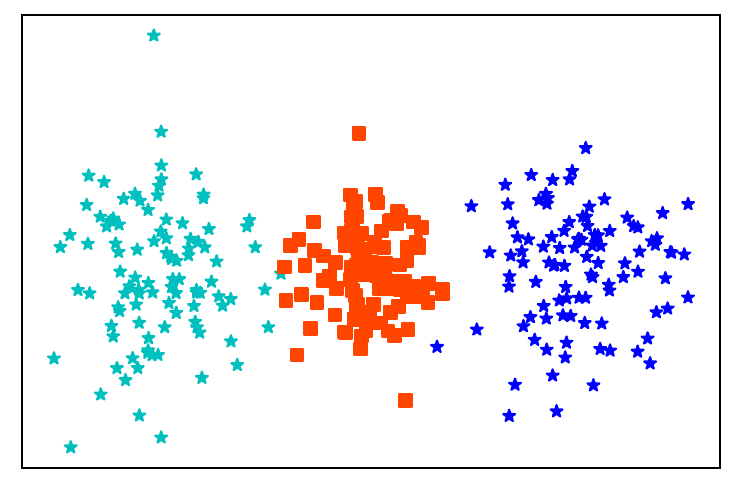}
	}
    \subfigure[Small margin]{
		\label{fig:demo2_1}
		\includegraphics[height=0.16\columnwidth]{./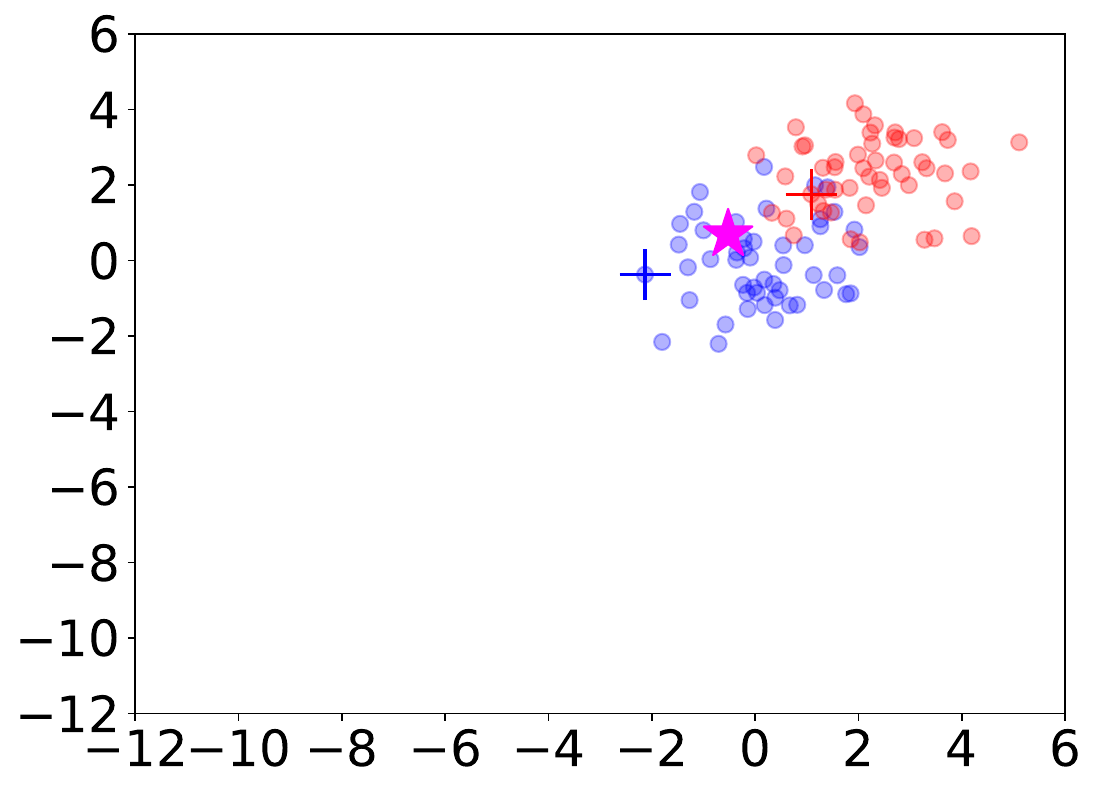}
	}
	\subfigure[Large margin]{
		\label{fig:demo2_2}
		\includegraphics[height=0.16\columnwidth]{./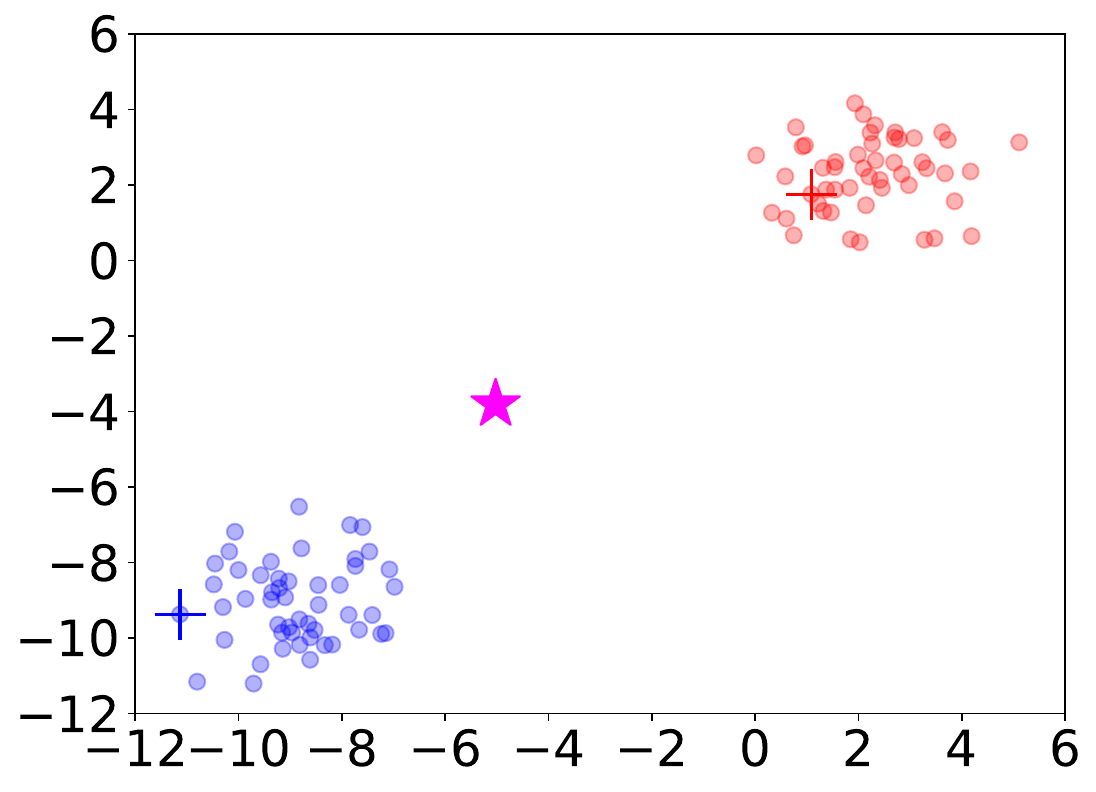}
	}
	\caption{Toy examples to illustrate limitations of Mixup. 
	Colors and shapes denote classes and domains, respectively.
	(a) Mixup generates unrecognizable synthetic data. 
	(b) Mixup with only class information can mitigate such an issue. 
 (c) Mixup tends to generate noisy data. (d) The large margin can reduce generations of synthetic noisy data.
	}
	\label{fig:demo1}
\end{figure}


\subsection{Limitations of Mixup-based DG}

Although the vanilla Mixup method can enhance data diversity, it fails to discern which features are useful for training the model. 
It only increases the diversity of all features equally. 
In DG, it cannot distinguish domain features and classification features, which results in the entanglement of domains and classes. 
It is unclear which parts of the increased diversity are useful for matching the categories. 
When incorrect matching between categories and features occurs, vanilla Mixup negatively affects model performance due to the introduction of interfering information. 
\figurename~\ref{fig:demo1_1} shows that vanilla Mixup directly mixes data without discerning classification and domain information. 
When mixing data points with the cyan ``+''s and blue ``o''s (circled in red), the red square data points are generated. 
Since the red square points are generated by two different classes, their labels should be in between the two classes, which means these points should lie between the two classes. 
However, as can be observed from \figurename~\ref{fig:demo1_1}, the red squares almost completely overlap with the blue``+''s, which means they prefer to be the blue class according to the locations. 
Mixed domain information interferes with the matching of synthetic data points and synthetic labels. 
Not only vanilla Mixup, but also some adapted Mixup variants (e.g., manifold Mixup~\cite{verma2019manifold} which mixes in the hidden states) have the same limitation.

On the other hand, Mixup in DG is more likely to generate noisy synthetic data points~\cite{lu2022semantic}. 
Even when samples from different classes in the same domain are away from each other, data points from another domain with different distributions may be close to a cluster with a different category. 
When two clusters are close to each other, noisy synthetic data points are more likely to be generated.
As shown in \figurename~\ref{fig:demo2_1}, the blue cluster and the red cluster are very close. 
Noisy data points (e.g., the synthetic data points generated by the red ``+''s and the blue ``+''s) are generated with a high probability by Mixup.

\section{The Proposed \textnormal{FIXED} Method}

In this paper, we propose the domain-invariant Feature MIXup with Enhanced Discrimination (\method) to address the aforementioned limitations of Mixup-based DG methods.
The model architecture of \method is illustrated in \figurename~\ref{fig:frame}.
We introduce its two critical modules as follows.

\subsection{FIX: Domain-invariant Feature MIXup}
We first introduce \emph{domain-invariant} feature Mixup to discern the domain and class information, which is our main contribution.
As suggested in \cite{ganin2016domain}, domain-invariant features contain more informative knowledge for classification than raw data~\cite{zhang2018mixup} or the manifold Mixup~\cite{verma2019manifold}.
Such feature Mixup is general and can be embedded in many existing DG methods.
Let $\mathbf{z}$ be the domain-invariant feature. Then, our approach can be formulated as:
\begin{equation}
	\label{eqa:fix}
	\begin{aligned}
	    \lambda\sim Beta(\alpha, \alpha),
		\tilde{\mathbf{z}} =\lambda \mathbf{z}_i + (1-\lambda) \mathbf{z}_j,
		\tilde{y} =  \lambda y_i + (1-\lambda)y_j.
	\end{aligned}
\end{equation}

Note that this is \emph{not} the same as manifold Mixup~\cite{verma2019manifold} which operates on random layers and does not involve domain-invariant feature learning.
Although domain-invariant feature learning alone brings about improvements for generalization, they usually lack diversity due to restrictions on the learning process. 
Therefore, increasing diversity of these features can make classification information diverse and avoid entangling with useless domain information. 
Since the diversity of class information is increased, the corresponding labels are also mixed for better matching, which is different from Mixstyle~\cite{zhou2021domain} and FACT~\cite{xu2021fourier}. 
As shown in \figurename~\ref{fig:demo1_2}, when Mixup is performed on domain-invariant features that have no interference from classification information, the diversity of data is indeed enhanced with almost no unrecognizable synthetic data points being generated. 

\begin{wrapfigure}{r}{0.5\textwidth}
	\centering
	\includegraphics[width=0.5\textwidth]{./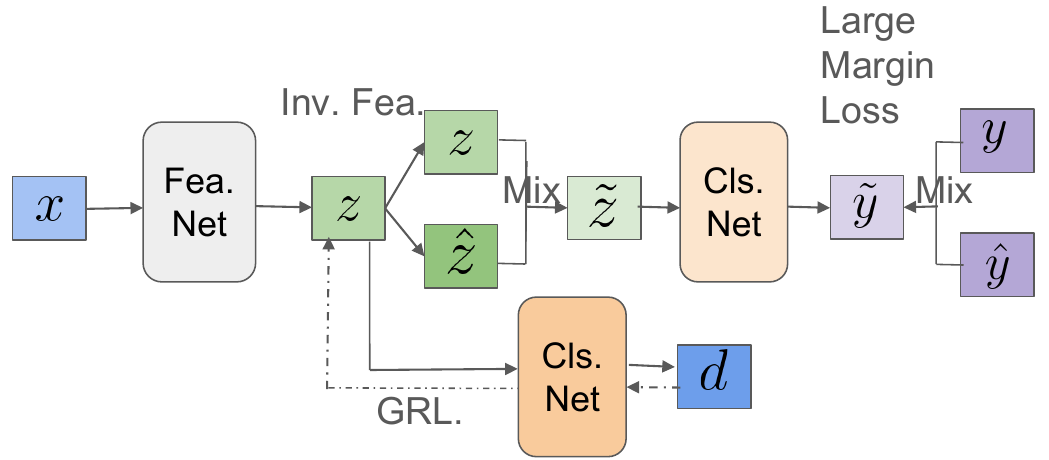}
	\caption{The network architecture of \method.}
	\label{fig:frame}
\end{wrapfigure}

As shown in \figurename~\ref{fig:frame}, we adopt DANN~\cite{ganin2016domain} to learn domain-invariant features for its popularity and effectiveness.
Nevertheless, \methodb can also work with other methods for domain-invariant learning, which is shown in later experiments in Sec. \ref{sec-exp-exten}\footnote{In the following, if there is no special note, \methodb denotes \methodb implemented with DANN.}.
The outputs of the bottleneck layer are viewed as domain-invariant features. The Mixup operation is performed on this layer. 
Correspondingly, the class labels are also mixed to increase the diversity of data while domain labels remain unchanged. 
Feature Mixup is performed within each batch. 
Concretely, for a batch of $\mathbf{z}$, we shuffle its indices and obtain $\hat{\mathbf{z}}$. 
Then, $\mathbf{z}$ and $\hat{\mathbf{z}}$ are mixed to obtain $\tilde{\mathbf{z}}$, which is used as the inputs for the subsequent layers. 
At the same time, $\tilde{y}$ is generated accordingly.

\subsection{Enhancing Discrimination}
To further enhance discrimination, we introduce a \emph{large margin} loss into Mixup just as in \cite{lu2022semantic} to complete the design of \method.
We follow~\cite{elsayed2018large} to derive the large margin loss as:
\begin{equation}
	\begin{aligned}
		\ell_{lm}(h(\mathbf{x}_i),y_i) =  \mathscr{G}_{k\neq y_i} \max\{0, 
		\gamma+ d_{h,\mathbf{x}_i,\{k,y_i\}} \mathrm{sign}(h_k(\mathbf{x}_i) - h_{y_i}(\mathbf{x}_i) \},
	\end{aligned}
	\label{eqa:margin}
\end{equation}
where $\ell_{lm}$ is large margin loss, $\mathscr{G}$ is an aggregation operator for the multi-class setting, and $\mathrm{sign}(\cdot)$ adjusts the polarity of the distance. 
$h_k: \mathcal{X} \rightarrow \mathbb{R}$ generates a prediction score for classifying the input vector $\mathbf{x}\in\mathcal{X}$ to class $k$. 
$\gamma$ is the distance to the boundary which we expect. $d_{h,\mathbf{x},\{k_1,k_2\}}$
is the distance of a point $\mathbf{x}$ to the decision boundary of class $i$ and $j$, which can be computed as:
\begin{equation}
	d_{h,\mathbf{x},\{k_1,k_2\}} = \min_{\delta} ||\delta||_p, s.t.~ h_{k_1}(\mathbf{x}+\delta) = h_{k_2}(\mathbf{x}+\delta),
	\label{eqa:distbound}
\end{equation}
where $||\cdot||_p$ is $l_p$ norm. 
As shown in~\cite{elsayed2018large}, \equationname~\eqref{eqa:margin} can be computed as:
\begin{equation}
	\begin{aligned}
		 \mathscr{G}_{k \neq y_{i}} \max \{ 0,  \gamma+\frac{h_{k}\left(\mathbf{x}_{i}\right)-h_{y_{i}}\left(\mathbf{x}_{i}\right)}{\left\|\nabla_{\mathbf{x}} h_{k}\left(\mathbf{x}_{i}\right)-\nabla_{\mathbf{x}} h_{y_{i}}\left(\mathbf{x}_{i}\right)\right\|_{q}} \},
	\end{aligned}
	\label{eqa:approx}
\end{equation}
where $q = \frac{p}{p-1}$.
As shown in \figurename~\ref{fig:demo2_2}, large margin can reduce noisy synthetic data points to enhance discrimination.

\subsection{Summary}
Combining the domain-invariant feature learning module and the large margin module, the objective function of \method can be formulated as:
\begin{equation}
\begin{aligned}
\min \mathbb{E}_{(\mathbf{x}_1,y_1),(\mathbf{x}_2,y_2)\sim \mathbb{P}} \mathbb{E}_{\lambda\sim Beta(\alpha,\alpha)} [\ell_{lm}(G_y(  \mathrm{Mix}_{\lambda}(\mathbf{z}_1,\mathbf{z}_2)),\mathrm{Mix}_{\lambda}(y_1,y_2))+
\ell_d(G_d(R_{\eta}(\mathbf{z}_1)),D)],
\end{aligned}
\end{equation}
where $\mathbb{P}$ denotes the distribution of all data. $\operatorname{Mix}(\cdot, \cdot)$ is a Mixup function, $\mathbf{z}_1=G_f(\mathbf{x}_1)$, $\mathbf{z}_2=G_f(\mathbf{x}_2)$ with
$G_f, G_y, G_d$ the feature net, classification layer, and discriminator, respectively. We perform \method in batches.
$\ell_d$ is the cross-entropy loss. 
$R_{\eta}$ is the gradient reversal layer~\cite{ganin2016domain} and $D$ is the domain label.
Note that DANN is only one possible option for domain-invariant learning.
We show that \method can work with CORAL~\cite{sun2016deep} as an alternative implementation in Section \ref{sec-exp-exten}.

\section{Analytical Evaluation}
In this section, we offer theoretical analysis to shed light on the reasons behind the remarkable performance of \method from two aspects: 1) distribution coverage and 2) inter-class distance.

\begin{figure}[t!]
	\centering
	\subfigure[Implicit shrinkage]{
		\label{fig:insight1}
		\includegraphics[width=.3\columnwidth]{./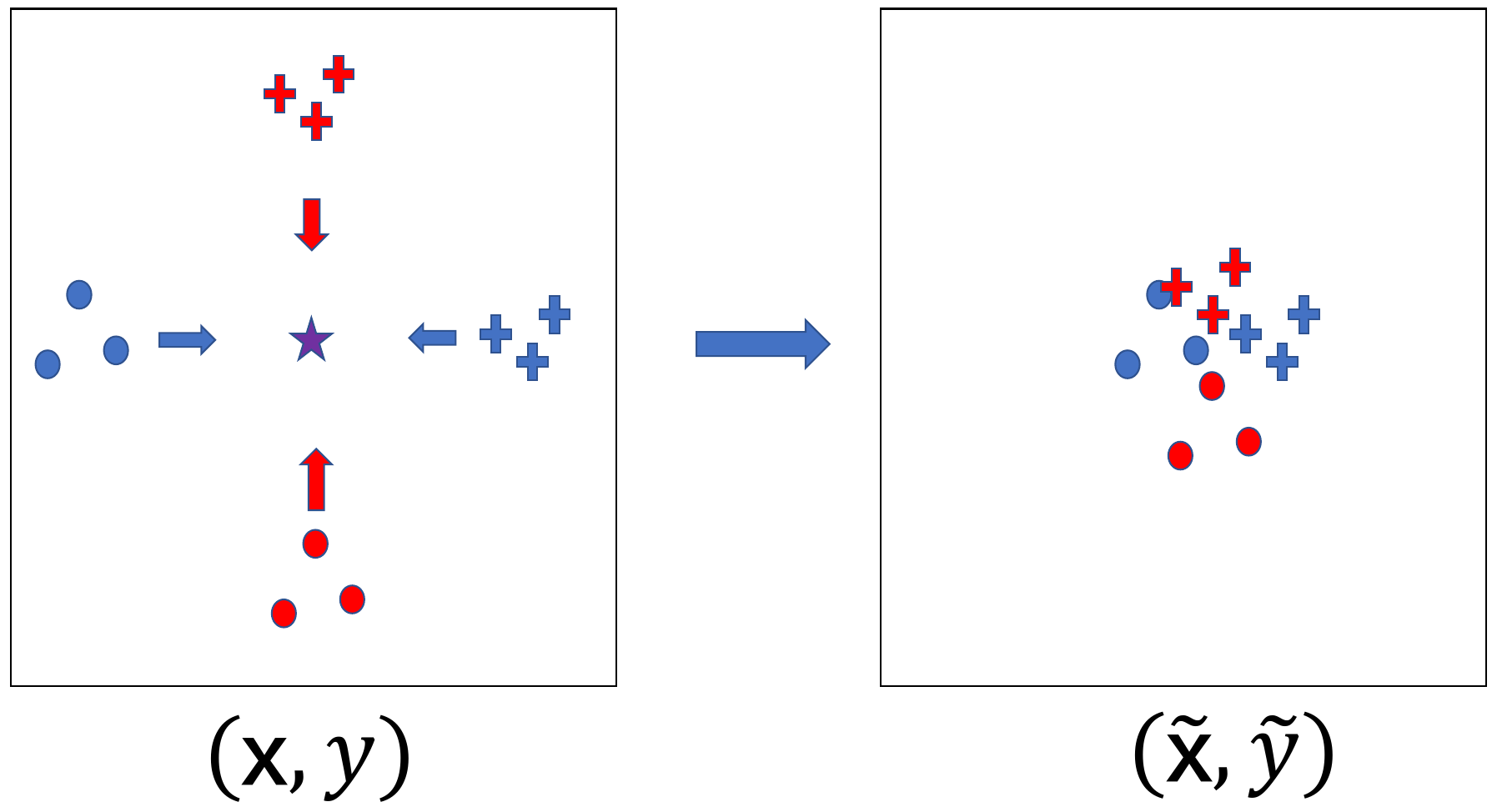}
	}
	\subfigure[Possible distribution range]{
		\label{fig:insight2}
		\includegraphics[width=.35\columnwidth]{./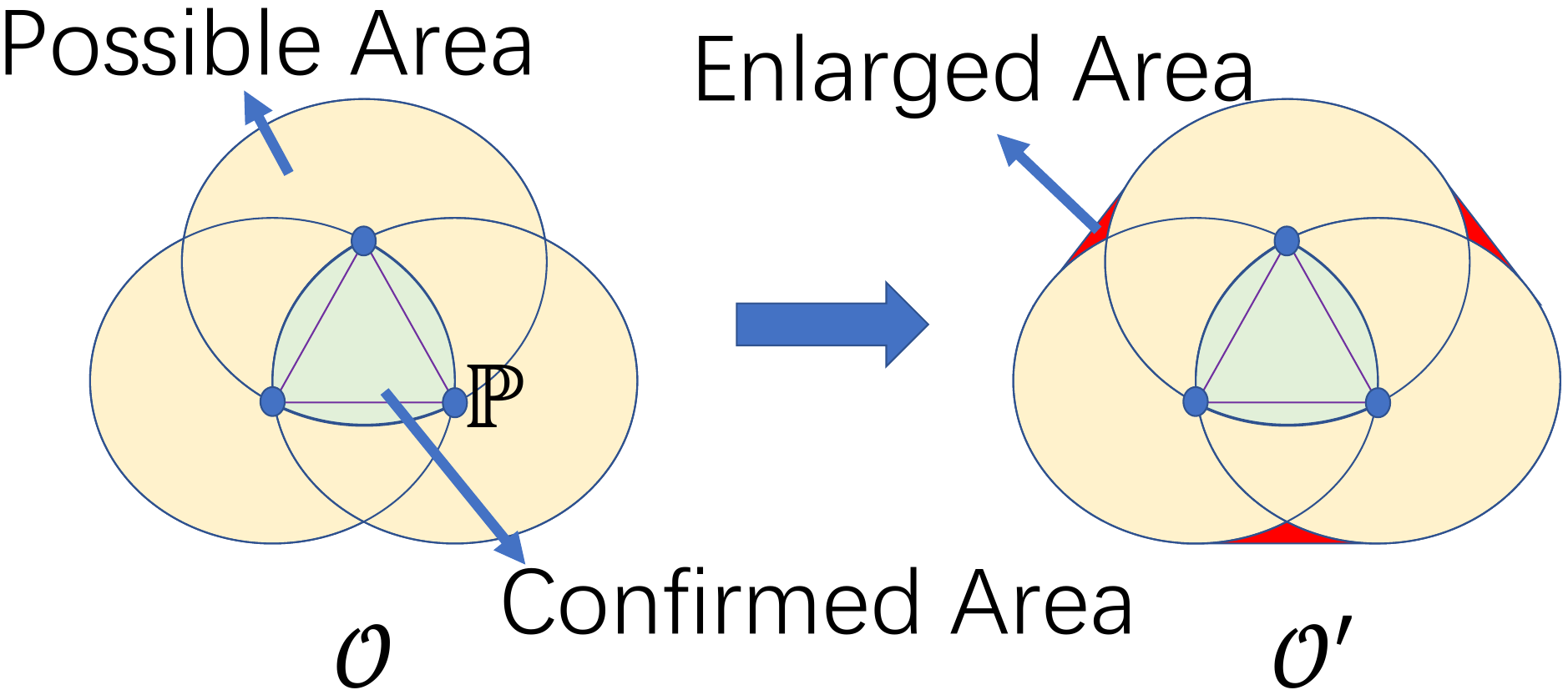}
	}
	\caption{Toy examples of theoretical insights. (a) After the implicit shrinkage via Mixup, classes (denoted by different colors) mix together, bringing difficulty to classification. Different shapes denote domains. (b) \method enlarges the distribution cover range. Vertices represent distributions while colored areas represent possible $\mathcal{O}$ in Prop.~\ref{pro:dg1} and $\mathcal{O}'$ in Prop.~\ref{pro:dg2}.}
	\label{fig:insight}
\end{figure}

\subsection{Why Mixup is not Good Enough}
\begin{proposition}
\label{pro:mix}
(modified from Theorem 1 in~\cite{carratino2020mixup}). Let $\theta \sim Beta_{[\frac{1}{2},1]}(\alpha, \alpha)$ and $j\sim Unif([n])$ be two random variables with $\alpha>0$, $n>0$, and let $\bar{\theta} = \mathbb{E}_{\theta}\theta$. For any training set $\mathcal{S}_n$, there exist two random perturbations $(\delta_i, \epsilon_i)$ with $\mathbb{E}_{\theta,j} \delta_i = \mathbb{E}_{\theta,j} \epsilon_i = 0, i\in[n].$ Denote $\varepsilon^{Mixup}$ the error of Mixup, we have 
\begin{equation}
\label{eqa:pmixtr}
\begin{aligned}
\varepsilon^{Mixup}(h)
= \frac{1}{n}\sum_{i=1}^n \mathbb{E}_{\theta,j} \ell(h(\tilde{\mathbf{x}}_i), \tilde{y}_i)
=\frac{1}{n}\sum_{i=1}^n \mathbb{E}_{\theta,j} \ell(h(\bar{\mathbf{x}}+\bar{\theta}(\mathbf{x}_i - \bar{\mathbf{x}})+\delta_i), \bar{y}+\bar{\theta}(y_i - \bar{y})+\epsilon_i).
\end{aligned}
\end{equation}
\end{proposition}

Note that $\bar{\theta} \in [1/2,1]$.
From \equationname~\eqref{eqa:pmixtr}, we can see that the transformation from $(\mathbf{x}_i,y_i)$ to $(\tilde{\mathbf{x}}_i,\tilde{y}_i)$ \emph{shrinks} the inputs and the outputs towards their mean with perturbations. 
When there exist spurious relations induced by redundant domain information, it may bring confusion when performing Mixup, which is demonstrated in \figurename~\ref{fig:insight1}.
Moreover, introducing the large margin can make classes far from each other and thereby reduce confusion during Mixup.

\subsection{\method has Larger Distribution Coverage}

We derive our theory to prove that \method has a larger distribution coverage.

\begin{proposition}
\label{pro:dg2}
Let $\mathcal{X}$ be a space and $\mathcal{H}$ be a class of hypotheses corresponding to this space. Let $\mathbb{Q}$ and the collection $\{\mathbb{P}_i \}_{i=1}^M$ be distributions over $\mathcal{X}$ and let $\{\phi_i \}_{i=1}^M$ be a collection of non-negative coefficient with $\sum_i \phi_i = 1$. Let the object $\mathcal{O}'$ be a set of distributions such that for every $\mathbb{S}\in \mathcal{O}'$ the following holds
\begin{equation}
\label{eqa:dg2-con}
     d_{\mathcal{H}\Delta \mathcal{H}}(\sum_i \phi_i\mathbb{P}_i, \mathbb{S}) \leq \max_{i,j} d_{\mathcal{H}\Delta \mathcal{H}}(\mathbb{P}_i,\mathbb{P}_j).
\end{equation}
Then, for any $h\in \mathcal{H}$, 
\begin{equation}
    \label{eqa:dg2}
    \begin{aligned}
    \varepsilon_{\mathbb{Q}}(h)\leq \lambda' + \sum_i \phi_i \varepsilon_{\mathbb{P}_i}(h) + \frac{1}{2}\min_{\mathbb{S}\in\mathcal{O}'}  d_{\mathcal{H}\Delta \mathcal{H}}(\mathbb{S}, \mathbb{Q}) + \frac{1}{2}\max_{i,j} d_{\mathcal{H}\Delta \mathcal{H}}(\mathbb{P}_i, \mathbb{P}_j)
    \end{aligned}
\end{equation}
where $\lambda'$ is the error of an ideal joint hypothesis. 
\end{proposition}

\begin{proposition}
\label{cor:rel}
Under the same conditions in \ref{pro:dg2},
\begin{equation}
     \mathcal{O}=\{S|\sum_i \phi_i d_{\mathcal{H}\Delta \mathcal{H}}(\mathbb{P}_i, \mathbb{S}) \leq \max_{i,j} d_{\mathcal{H}\Delta \mathcal{H}}(\mathbb{P}_i,\mathbb{P}_j)\},
\end{equation}
\begin{equation}
     \mathcal{O}'=\{S|d_{\mathcal{H}\Delta \mathcal{H}}(\sum_i \phi_i\mathbb{P}_i, \mathbb{S}) \leq \max_{i,j} d_{\mathcal{H}\Delta \mathcal{H}}(\mathbb{P}_i,\mathbb{P}_j)\},
\end{equation}
we have
\begin{equation}
    \label{eqa:rel}
    \mathcal{O}\subset \mathcal{O}'.
\end{equation}
\end{proposition}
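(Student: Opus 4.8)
The plan is to deduce the inclusion directly from the definitions of $\mathcal{O}$ and $\mathcal{O}'$, reducing everything to one structural fact: the $\mathcal{H}\Delta\mathcal{H}$-divergence is \emph{jointly convex}, and in particular convex in its first argument. Concretely, I would first establish the inequality
\[
d_{\mathcal{H}\Delta\mathcal{H}}\Big(\textstyle\sum_i \phi_i\mathbb{P}_i,\ \mathbb{S}\Big) \ \le\ \sum_i \phi_i\, d_{\mathcal{H}\Delta\mathcal{H}}(\mathbb{P}_i,\mathbb{S})
\]
for every distribution $\mathbb{S}$, and then chain it with the defining condition \eqref{eqa:dg1-con} of $\mathcal{O}$ to land inside the defining condition \eqref{eqa:dg2-con} of $\mathcal{O}'$, since both conditions share the same right-hand side $\max_{i,j} d_{\mathcal{H}\Delta\mathcal{H}}(\mathbb{P}_i,\mathbb{P}_j)$.

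To prove the convexity inequality I would unfold Definition~\eqref{eqa:def-hdive} (applied to the class $\mathcal{H}\Delta\mathcal{H}$). For a fixed $h$, the map $\mathbb{P}\mapsto Pr_{\mathbb{P}}(I_h)$ is linear, so $Pr_{\sum_i\phi_i\mathbb{P}_i}(I_h)=\sum_i\phi_i Pr_{\mathbb{P}_i}(I_h)$; using $\sum_i\phi_i=1$ this gives
\[
d_{\mathcal{H}\Delta\mathcal{H}}\Big(\textstyle\sum_i\phi_i\mathbb{P}_i,\mathbb{S}\Big)
= 2\sup_{h}\Big| \sum_i \phi_i\big( Pr_{\mathbb{P}_i}(I_h)-Pr_{\mathbb{S}}(I_h)\big)\Big|.
\]
Then I would apply the triangle inequality inside the supremum, and use subadditivity of $\sup$ (the supremum of a sum is at most the sum of suprema) together with $\phi_i\ge 0$, to obtain
\[
d_{\mathcal{H}\Delta\mathcal{H}}\Big(\textstyle\sum_i\phi_i\mathbb{P}_i,\mathbb{S}\Big)
\le \sum_i \phi_i\,\Big(2\sup_h \big| Pr_{\mathbb{P}_i}(I_h)-Pr_{\mathbb{S}}(I_h)\big|\Big)
= \sum_i \phi_i\, d_{\mathcal{H}\Delta\mathcal{H}}(\mathbb{P}_i,\mathbb{S}).
\]
Finally, for any $\mathbb{S}\in\mathcal{O}$, condition \eqref{eqa:dg1-con} bounds the right-hand side by $\max_{i,j} d_{\mathcal{H}\Delta\mathcal{H}}(\mathbb{P}_i,\mathbb{P}_j)$, hence $d_{\mathcal{H}\Delta\mathcal{H}}(\sum_i\phi_i\mathbb{P}_i,\mathbb{S})\le \max_{i,j} d_{\mathcal{H}\Delta\mathcal{H}}(\mathbb{P}_i,\mathbb{P}_j)$, i.e.\ $\mathbb{S}\in\mathcal{O}'$; therefore $\mathcal{O}\subset\mathcal{O}'$.

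There is no real obstacle here — the entire argument rests only on linearity of $\mathbb{P}\mapsto Pr_{\mathbb{P}}(I_h)$ for a fixed measurable set, the triangle inequality, and subadditivity of the supremum, so no probabilistic or hypothesis-class machinery beyond Eq.~\eqref{eqa:def-hdive} is needed. The only point worth stating carefully is why the convexity step can be sharp: cancellation among the signed quantities $Pr_{\mathbb{P}_i}(I_h)-Pr_{\mathbb{S}}(I_h)$ typically makes $d_{\mathcal{H}\Delta\mathcal{H}}(\sum_i\phi_i\mathbb{P}_i,\mathbb{S})$ strictly smaller than $\sum_i\phi_i d_{\mathcal{H}\Delta\mathcal{H}}(\mathbb{P}_i,\mathbb{S})$, so the inclusion is generally strict. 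This is precisely the geometric picture in \figurename~\ref{fig:insight2}: $\mathcal{O}'$ is the enlarged cover range around $\mathbb{Q}$, which is why the bound in Proposition~\ref{pro:dg2} is an improvement over that in Proposition~\ref{pro:dg1}.
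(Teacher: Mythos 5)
Your proposal is correct and follows essentially the same route as the paper's proof: both reduce the inclusion to the single inequality $d_{\mathcal{H}\Delta\mathcal{H}}(\sum_i\phi_i\mathbb{P}_i,\mathbb{S})\le\sum_i\phi_i\, d_{\mathcal{H}\Delta\mathcal{H}}(\mathbb{P}_i,\mathbb{S})$ and then chain it with the defining condition of $\mathcal{O}$ to land in $\mathcal{O}'$. The only difference is one of detail: the paper simply invokes ``the triangle inequality'' for that step, whereas you derive it from the definition of the divergence via linearity of $\mathbb{P}\mapsto Pr_{\mathbb{P}}(I_h)$, the triangle inequality for absolute values, and subadditivity of the supremum --- which is the more accurate justification, since the inequality is really convexity of the divergence in its first argument.
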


From Prop.~\ref{pro:dg2} and Prop.~\ref{cor:rel}\footnote{Proofs can be found in \sectionautorefname~\ref{app-sec-proof}.}, $\mathcal{O}'$ has a larger possible cover range than $\mathcal{O}$ 
, which brings more \emph{diversity}.
As shown in the right part of \figurename~\ref{fig:insight2}, the red area is the possible increased area. 
Possible areas contain distributions that may be in $\mathcal{O}(\mathcal{O}')$ while confirmed areas contain distributions that must be in $\mathcal{O}(\mathcal{O}')$.
It reveals that the area with a constant distance to purple lines and blue vertices is larger than the area with the same distance to blue vertexes, where purple lines can be viewed as Mixup of vertexes expressed as $\sum_i \phi_i\mathbb{P}_i$. 
Moreover, yellow areas may have a higher possibility to satisfy Eq. \eqref{eqa:dg2-con} since the points in it have shorter distances from purple lines and blue vertices.

\subsection{Insights from Inter-class and Intra-class Distances}

Recently, it has come to researchers' attention that just learning domain invariant features maybe not enough for good generalization and discrimination, especially in the field of domain adaptation~\cite{tang2020unsupervised,bui2021exploiting}.
A common approach to enhance generalization and discrimination is to enlarge the inter-class distance and decrease the intra-class distance~\cite{blei2003latent}, which has already been utilized for domain adaptation~\cite{hu2015deep} and domain generalization~\cite{kim2021selfreg}. 
From \equationname~\ref{eqa:pmixtr}, feature Mixup can be perceived as a tool to decrease the intra-class distance, while the large margin loss can enlarge the inter-class distance. This indicates that \method enhances generalization and discrimination.

\section{Experimental Evaluation}

While most literature on DG evaluates the algorithms on image classification datasets, we perform evaluations on \textbf{both} image classification and sensor-based human activity recognition (i.e., time series) data.
This can help study the generality of our method across multiple modalities.

\subsection{Evaluation on Image Classification Datasets}

\subsubsection{Datasets}
We adopt three popular DG benchmark datasets. 
(1) \textbf{Digits-DG}~\cite{zhou2020deep}, which contains four digit datasets including MNIST~\cite{lecun1998gradient}, MNIST-M~\cite{ganin2015unsupervised}, SVHN~\cite{netzer2011reading}, SYN~\cite{lecun1998gradient}. 
The four datasets differ in font style, background, and image quality. 
Following~\cite{zhou2020deep}, we select 600 images per class from each dataset. 
(2) \textbf{PACS}~\cite{li2017deeper}, which is an object classification benchmark with four domains (i.e., photos, art-paintings, cartoons, sketches). 
There exist large discrepancies in image styles among different domains. 
Each domain contains seven classes and there are 9,991 images in total.
(3) \textbf{Office-Home}~\cite{venkateswara2017deep}, which is an object classification benchmark that contains four domains (i.e., Art, Clipart, Product, Real-World). 
The domain shift comes from image styles and viewpoints. 
Each domain contains 65 classes and there are 15,500 images in total.

\subsubsection{Baselines and Implementation Details}

For the experiments using ResNet-18, i.e., Office-Home and PACS datasets, and Digits-DG dataset that uses DTN as the backbone following~\cite{liang2020we}, we re-implement several recent strong comparison methods by extending the DomainBed~\cite{gulrajani2020search} codebase for fair study.
For the algorithms that are not implemented by ourselves, we copy their results from their papers when the settings are the same. 
Our reproductions are marked with *.
We select the best model via results on validation datasets.
Specifically, we split each source domain with a ratio of $8:2$ for training and validation following DomainBed and report average results of three trials\footnote{The ratio $8:2$ is suggested by DomainBed for fairness. Several methods adopted $9:1$ which involves more training data that are easier to perform better. In these cases, our method still outperforms them.}.

\begin{table}[htbp]
\centering
\caption{The results on PACS, Office-Home, and Digits-DG. The bold items are the best results.}
\label{tab:my-table-img}
\resizebox{\textwidth}{!}{
\begin{tabular}{@{}llllll|llllll@{}}
\toprule
\multicolumn{6}{c|}{PACS}                                                                       & \multicolumn{6}{c}{Office-Home}                                                                \\ 
Method    & A              & C              & P              & S              & AVG            & Method    & A              & C              & P              & R              & AVG            \\ \midrule
ERM*      & 77             & 74.53          & 95.51          & 77.86          & 81.22          & ERM*      & 58.63          & 47.95          & 72.22          & 73.03          & 62.96          \\
DANN*~\cite{ganin2016domain}      & 78.71          & 75.3           & 94.01          & 77.83          & 81.46          & DANN*~\cite{ganin2016domain}     & 57.73          & 44.42          & 71.95          & 72.5           & 61.65          \\
CORAL*~\cite{sun2016deep}    & 77.78          & 77.05          & 92.63          & 80.55          & 82             & CORAL*~\cite{sun2016deep}    & 58.76          & 48.75          & 72.34          & 73.63          & 63.37          \\
Mixup*~\cite{zhang2018mixup}    & 79.1           & 73.46          & 94.49          & 76.71          & 80.94          & MMD-AAE~\cite{li2018domain1}     & 56.5           & 47.3           & 72.1           & \textbf{74.8}  & 62.7           \\
MetaReg~\cite{balaji2018metareg}   & 83.7           & 77.2           & 95.5           & 70.3           & 81.7           & Mixup*~\cite{zhang2018mixup}    & 55.79          & 47.88          & 71.95          & 72.83          & 62.11          \\
Jigen~\cite{carlucci2019domain}     & 79.42          & 75.25          & 96.03          & 71.35          & 80.51          & Jigen~\cite{carlucci2019domain}     & 53.04          & 47.51          & 71.47          & 72.79          & 61.2           \\
Epi-FCR~\cite{li2019episodic}   & 82.1           & 77             & 93.9           & 73             & 81.5           & MIX-ALL*~\cite{wang2020heterogeneous}  & 56.08          & 46.9           & 72.07          & 73.93          & 62.24          \\
GroupDRO*~\cite{sagawa2019distributionally} & 76.03          & 76.07          & 91.2           & 79.05          & 80.59          & GroupDRO*~\cite{sagawa2019distributionally} & 57.6           & 48.77          & 71.53          & 73.17          & 62.77          \\
RSC*~\cite{huang2020self}      & 79.74          & 76.11          & 95.57          & 76.64          & 82.01          & RSC*~\cite{huang2020self}      & 58.96          & 49.16          & 72.54          & 74.16          & 63.7           \\
MIX-ALL*~\cite{wang2020heterogeneous}  & 80.66          & 73.85          & 93.83          & 76.05          & 81.1           & ANDMask*~\cite{parascandolo2020learning}   & 56.74          & 45.86          & 70.67          & 73.19          & 61.61          \\
L2A-OT~\cite{zhou2020learning}    & 83.3           & 78.2           & 96.2           & 73.6           & 82.8           & SagNet~\cite{nam2021reducing}     & 60.2           & 45.38          & 70.42          & 73.38          & 62.34          \\
MMLD~\cite{matsuura2020domain}       & 81.28          & 77.16          & 96.09          & 72.29          & 81.83          & Ours      & \textbf{61.06} & \textbf{50.08} & \textbf{73.39} & 74.45          & \textbf{64.75} \\
\cmidrule(lr){7-12}
DDAIG~\cite{zhou2020deep}     & 84.2           & 78.1           & 95.3           & 74.7           & 83.1           & \multicolumn{6}{c}{Digits-DG}                                                                  \\
SNR~\cite{jin2020style}       & 80.3           & 78.2           & 94.5           & 74.1           & 81.8           & Method    & MNIST          & MNIST-M        & SVHN           & SYN            & AVG            \\
\cmidrule(lr){7-12}
EISNet~\cite{wang2020learning}    & 81.89          & 76.44          & 95.93          & 74.33          & 82.15          & ERM*      & 97.55          & 55.52          & 59.98          & 89.25          & 75.58          \\
CSD~\cite{piratla2020efficient}       & 78.9           & 75.8           & 94.1           & 76.7           & 81.4           & DANN*~\cite{ganin2016domain}     & 97.77          & 55.62          & 61.85          & 89.37          & 76.15          \\
InfoDrop~\cite{shi2020informative}  & 80.27          & 76.54          & 96.11          & 76.38          & 82.33          & CORAL*~\cite{sun2016deep}     & 97.62          & 57.68          & 57.82          & 90.12          & 75.81          \\
ANDMask*~\cite{parascandolo2020learning}  & 76.22          & 73.81          & 91.56          & 78.06          & 79.91          & MMD-AAE~\cite{li2018domain1}   & 96.5           & 58.4           & \textbf{65}    & 78.4           & 74.6           \\
CuMix~\cite{mancini2020towards}      & 82.3             & 76.5           & 95.1           & 72.6           & 81.6           & Mixup*~\cite{zhang2018mixup}    & 97.5           & 57.95          & 54.75          & 89.8           & 75             \\
StableNet~\cite{zhang2021deep}  & 81.74          & 79.91          & \textbf{96.53} & 80.5           & 84.69          & Jigen~\cite{carlucci2019domain}      & 96.5           & 61.4           & 63.7           & 74             & 73.9           \\
MixStyle~\cite{zhou2021domain}  & 84.1           & 78.8           & 96.1           & 75.9           & 83.7           & GroupDRO*~\cite{sagawa2019distributionally} & 97.48          & 53.47          & 55.63          & \textbf{92.15} & 74.68          \\
SagNet~\cite{nam2021reducing}    & 83.58          & 77.66          & 95.47          & 76.3           & 83.25          & RSC*~\cite{huang2020self}       & \textbf{97.78} & 56.27          & 62.38          & 89.25          & 76.42          \\
MatchDG~\cite{mahajan2021domain}   & 79.77          & \textbf{80.03} & 95.93          & 77.11          & 83.21          & MIX-ALL*~\cite{wang2020heterogeneous}   & 96.23          & 59.28          & 54.73          & 83.57          & 73.45          \\
L2D~\cite{wang2021learning}       & 81.44          & 79.56          & 95.51          & 80.58          & 84.27          & MixStyle~\cite{zhou2021domain}   & 96.5           & \textbf{63.5}  & 64.7           & 81.2           & 76.5           \\
SFA~\cite{li2021simple}       & 81.2           & 77.8           & 93.9           & 73.7           & 81.7           & ANDMask*~\cite{parascandolo2020learning}   & 96.85          & 56             & 59.47          & 88.17          & 75.12          \\
Ours      & \textbf{84.23} & 78.8           & 96.37          & \textbf{83.54} & \textbf{85.74} & Ours      & 97.67          & 56.72          & 64.17          & 88.73          & \textbf{76.82} \\ \bottomrule
\end{tabular}}
\vspace{-0.8cm}
\end{table}

\subsubsection{Results and Discussion}
\tablename~\ref{tab:my-table-img} shows the results on PACS, Office-Home, and Digits-DG datasets respectively where PACS and Office-Home used ResNet-18.
We observe that \method consistently outperforms all comparison methods.
For PACS, our method can have an over $1\%$ improvement compared to the second-best one. 
In an absolute fair environment, our method can even achieve an over $3\%$ improvement compared to the methods with stars. 
$3\%$ is a remarkable improvement since some methods, e.g. DANN, only have slight improvements compared to ERM. 
Our method can also have over $1\%$ and $0.3\%$ improvements compared to the second-best ones for Office-Home and Digits-DG respectively. 



We observe more insightful conclusions. 
(1) Vanilla Mixup even performs worse than ERM on some benchmarks, which illustrates that mixed domain information interfaces performances seriously. 
(2) There are small performance gaps among different methods on some benchmarks, e.g. Office-Home; and even ERM can achieve acceptable results. It is caused because there are few differences among different domains (Office-Home) or some other reasons. 
(3) Different domain splits are important for DG. For example, in Digits-DG, MixStyle shows a significant improvement on the second task. Even Jigen performs better than all other methods.
Hence, it is important to perform several random trials to record the averaged performance. 

\subsection{Evaluation on Human Activity Recognition}

\subsubsection{Datasets and Settings}

We evaluate our method on several DG benchmarks with three different settings on human activity recognition. 
Four datasets are used: UCI daily and sports dataset (DSADS)~\cite{barshan2014recognizing}, USC-SIPI human activity dataset (USC-HAD)~\cite{zhang2012usc}, UCI human activity recognition using smartphones data set (UCI-HAR)~\cite{anguita2012human} and PAMAP2 physical activity monitoring dataset
(PAMAP2)~\cite{reiss2012introducing}.
We mainly use the sliding window technique to preprocess data.
We constructed three settings for extensive evaluations of our method:
(1) \textbf{Cross-Person}: This setting utilizes USC-HAD dataset and 14 persons are divided into four groups. Each domain contains 12 classes. 
Each sample has two sensors with six dimensions.
(2) \textbf{Cross-Position}: This setting utilizes DSADS dataset and data from each position corresponds to a different domain. There are 19 classes in total. 
Each sample contains three sensors with nine dimensions.
(3) \textbf{Cross-Dataset}: This setting utilizes all four datasets, and each dataset corresponds to a different domain. 
Data of six common classes, including walking, walking upstairs, walking downstairs, sitting, standing, and laying, are selected. 
We choose two sensors from each dataset that belong to the same position. Data is down-sampled to ensure the dimensions of data in different datasets same.
It is easy to see that cross-dataset setting is more challenging than the other two since it contains more diversities. 

\begin{table*}[htbp]
\centering
\caption{Results on HAR benchmarks. The \textbf{bold} and \underline{underline} mean the best and second-best.}
\label{tab:har}
\resizebox{\textwidth}{!}{%
\begin{tabular}{cccccccccccccc}
\toprule
\multicolumn{1}{l}{} & Source & Target & ERM* & DANN*~\cite{ganin2016domain} & CORAL*~\cite{sun2016deep} & ANDMask*~\cite{parascandolo2020learning}  & GroupDRO*~\cite{sagawa2019distributionally} & RSC*~\cite{huang2020self} & Mixup*~\cite{zhang2018mixup}&MIX-ALL*~\cite{wang2020heterogeneous} & GILE~\cite{qian2021latent} & \method \\ \midrule
\multirow{5}{*}{\rotatebox{90}{X-Person}} &1,2,3&0&80.98&81.22&78.82&79.88&80.12&\underline{81.88}&79.98&78.44&78.00&\textbf{84.73}\\
&0,2,3&1&57.75&57.88&58.93&55.32&55.51&57.94&\underline{64.14}&59.32&62.00&\textbf{67.90}\\
&0,1,3&2&74.03&76.69&75.02&74.47&74.69&73.39&74.32&72.96&\underline{77.00}&\textbf{79.21}\\
&0,1,2&3&65.86&\underline{70.72}&53.72&65.04&59.97&65.13&61.28&63.46&63.00&\textbf{74.47}\\
&AVG&-&69.66&\underline{71.63}&66.62&68.68&67.57&69.59&69.93&68.54&70.00&\textbf{76.58} \\ \midrule
\multicolumn{1}{l}{} & Source & Target & ERM* & DANN*~\cite{ganin2016domain} & CORAL*~\cite{sun2016deep} & ANDMask*~\cite{parascandolo2020learning} & GroupDRO*~\cite{sagawa2019distributionally} & RSC*~\cite{huang2020self} & Mixup*~\cite{zhang2018mixup}&MIX-ALL*~\cite{wang2020heterogeneous} & - & \method \\ \midrule
\multirow{5}{*}{\rotatebox{90}{X-Position}} &1,2,3,4&0&41.52&45.45&33.22&47.51&27.12&46.56&\underline{48.77}&40.1&-&\textbf{49.57}\\
&0,2,3,4&1&26.73&25.36&25.18&31.06&26.66&27.37&\underline{34.19}&31.16&-&\textbf{35.64}\\
&0,1,3,4&2&35.81&38.06&25.81&39.17&24.34&35.93&37.49&\textbf{41.16}&-&\underline{40.44}\\
&0,1,2,4&3&21.45&28.89&22.32&30.22&18.39&27.04&29.50&\underline{30.56}&-&\textbf{33.00}\\
&0,1,2,3&4&27.28&25.05&20.64&29.90&24.82&29.82&\underline{29.95}&29.31&-&\textbf{33.22}\\
&AVG&-&30.56&32.56&25.43&35.57&24.27&33.34&\underline{35.98}&34.46&-&\textbf{38.37} \\ \midrule
\multicolumn{1}{l}{} & Source & Target & ERM* & DANN*~\cite{ganin2016domain} & CORAL*~\cite{sun2016deep} & ANDMask*~\cite{parascandolo2020learning} & GroupDRO*~\cite{sagawa2019distributionally} & RSC*~\cite{huang2020self} & Mixup~\cite{zhang2018mixup}&MIX-ALL*~\cite{wang2020heterogeneous} & - & \method \\ \midrule
\multirow{5}{*}{\rotatebox{90}{X-Dataset}} &1,2,3&0&26.35&29.73&39.46&41.66&\textbf{51.41}&33.10&37.35&31.01&-&\underline{46.73}\\
&0,2,3&1&29.58&45.33&41.82&33.83&36.74&39.70&\underline{47.39}&38.42&-&\textbf{53.28}\\
&0,1,3&2&44.44&\underline{46.06}&39.10&43.22&33.20&45.28&40.24&37.52&-&\textbf{46.15}\\
&0,1,2&3&32.93&43.84&36.61&40.17&33.80&\underline{45.94}&23.12&22.8&-&\textbf{53.67}\\
&AVG&-&33.32&\underline{41.24}&39.25&39.72&38.79&41.01&37.03&32.44&-&\textbf{49.96}  \\ \midrule
-&AVG&-&44.51&\underline{48.48}&43.77&47.99&43.54&47.98&47.65&45.15&-&\textbf{54.97}\\
\bottomrule
\end{tabular}%
}
\vspace{-.3cm}
\end{table*}

\subsubsection{Baselines and Implementation Details}
For all three settings, we reproduce eight state-of-the-art baselines by ourselves. 
In addition, for cross-person on USC-HAD, we add the results of GILE~\cite{qian2021latent} obtained from results via their code.
For each benchmark, we randomly split each source domain into $80\%$ for training and $20\%$ for validation.

\subsubsection{Results and Discussion}

\begin{wrapfigure}{r}{0.5\textwidth}
\centering
\vspace{-1.5cm}
\subfigure[PACS]{
\includegraphics[width=0.22\textwidth]{./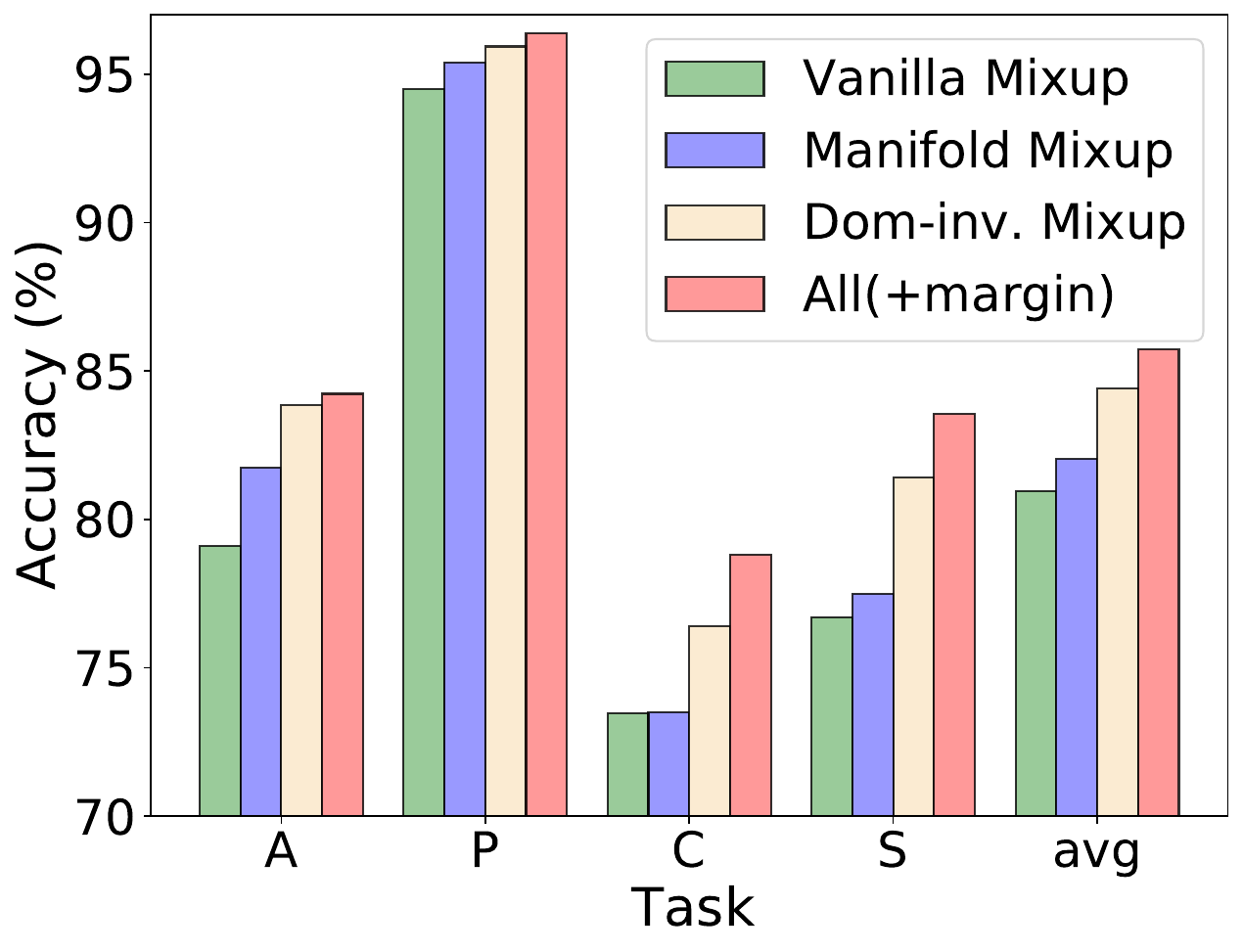}
	\label{fig:img_ablat-pacs}
}
\subfigure[Cross-Person]{
\includegraphics[width=0.22\textwidth]{./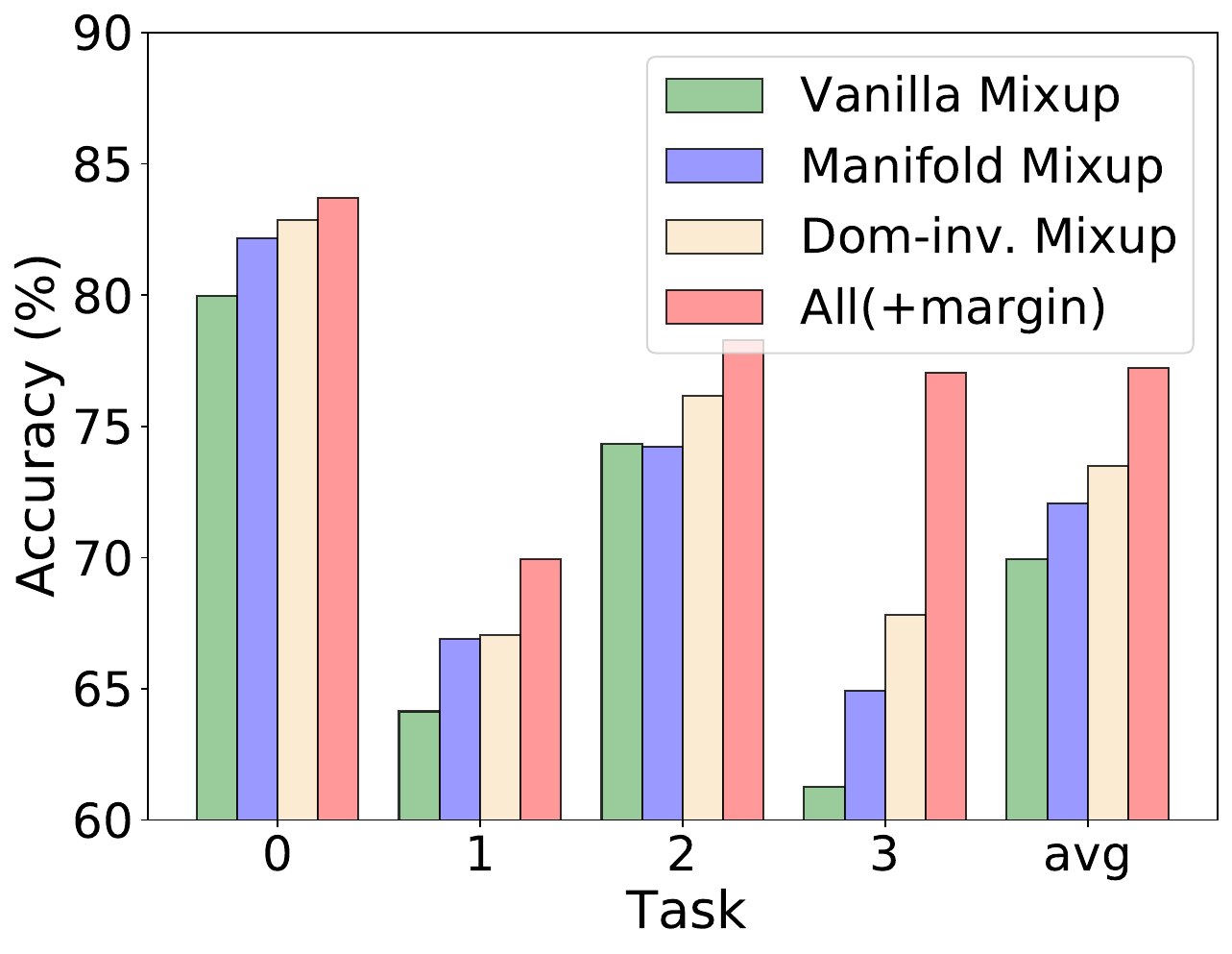}
	\label{fig:img_ablat-cross-person}
}
\vspace{-.4cm}
\caption{Ablation study of \method.}
\vspace{-.4cm}
\label{fig:img-ablat-m}
\end{wrapfigure}

The results on time series are presented in \tablename~\ref{tab:har}.
Overall, our method has an improvement of $\mathbf{6.5}\%$ average accuracy than the second-best method on average of these three settings.
This demonstrates generalization capability across different tasks of our method.
These settings represent DG scenarios of different difficulties (e.g., cross-dataset is much more difficult than cross-person), thus they can thoroughly reflect the performance of all methods in different situations.
As shown in \tablename~\ref{tab:har}, some methods deteriorate seriously for these three HAR benchmarks while our method achieves the best average accuracy on average and performs best almost on every task.
The results are consistent with image datasets.
To sum up, our method is effective in both image and time series datasets, indicating that it is a general approach for domain generalization.

\subsection{Qualitative Analysis}

\subsubsection{Ablation Study}

\begin{wraptable}{r}{0.5\textwidth}
\centering
\vspace{-2.5cm}
\caption{Ablation study of \method (PACS) to show the improvement on large margin loss.}
\label{tab:my-table-ablat-marg}
\resizebox{.5\textwidth}{!}{%
\begin{tabular}{lccccc}
\toprule
ALG          & A              & C             & P              & S              & AVG            \\ \midrule
ERM          & 77.00             & 74.53         & 95.51          & 77.86          & 81.22          \\
ERM+Margin   & 81.98          & 74.87         & \underline{96.17}    & 76.81          & 82.46          \\ \midrule
DANN         & 78.71          & 75.30          & 94.01          & 77.83          & 81.46          \\
DANN+Margin  & 82.52          & \underline{76.62}   & 95.57          & 78.49          & 83.30           \\ \midrule
Mixup        & 79.10           & 73.46         & 94.49          & 76.71          & 80.94          \\
Mixup+Margin & \underline{82.81}    & 74.23         & 94.91          & \underline{81.32}    & \underline{83.32}    \\ \midrule
Ours         & \textbf{84.23} & \textbf{78.8} & \textbf{96.37} & \textbf{83.54} & \textbf{85.74} \\ \bottomrule
\end{tabular}
}
\vspace{-0.35cm}
\end{wraptable}
We report our ablation study in \figurename~\ref{fig:img-ablat-m}. 
Compared with vanilla Mixup, Manifold Mixup shows slight improvements. 
It is caused by the reason that the model trained with categories as goals is biased towards containing more classification information in the deeper layers\footnote{We try our best to perform Manifold Mixup in the deeper layers, which leads remarkable improvements compared to Vanilla Mixup, but it is still worse than ours.}. 
It proves our motivation of domain-invariant Mixup from another view. 
Compared with Manifold Mixup, it is obvious that directly using Mixup on domain-invariant features has a remarkable improvement, which demonstrates that increasing the diversity of useful information for model training is able to bring benefits. 
In addition, we present results on large margin with existing methods in \tablename~\ref{tab:my-table-ablat-marg} After introducing large margin, there are extra improvements compared with domain-invariant Manifold Mixup. 
These experiments demonstrate that two components are both effective.

\subsubsection{Visualization Study}
We present visualization results to show the rationale of our method.
As shown in \figurename~\ref{fig:cimgvis1}, the same class in different domains has different distributions with ERM (data points with the same color and different shapes locate in different places), which is just like \figurename~\ref{fig:demo1_1}. 
Moreover, some classes are close to each other, which is like \figurename~\ref{fig:demo2_1}. 
If we do nothing for these situations, synthetic noisy points will be generated as mentioned above. 
Even if vanilla Mixup or Manifold Mixup are used, the above issues cannot be solved (see \figurename~\ref{fig:cimgvis2} and \figurename~\ref{fig:cimgvis3} respectively). 
With domain invariant features, \method can reduce the influence generated by redundant domain information (\figurename~\ref{fig:cimgvis5}) while it can bring enhance discrimination. 
Overall, with both considerations, \method achieves the best visualization effects in \figurename~\ref{fig:cimgvis4} where two problems have been relieved to a certain degree and thus leads to the best performance in \figurename~\ref{fig:img-ablat-m}.

\begin{figure*}[htbp]
	\centering
	\label{cfig:img_vis}
 \vspace{-.5cm}
	\subfigure[ERM]{
		\label{fig:cimgvis1}
		\includegraphics[width=0.15\textwidth]{./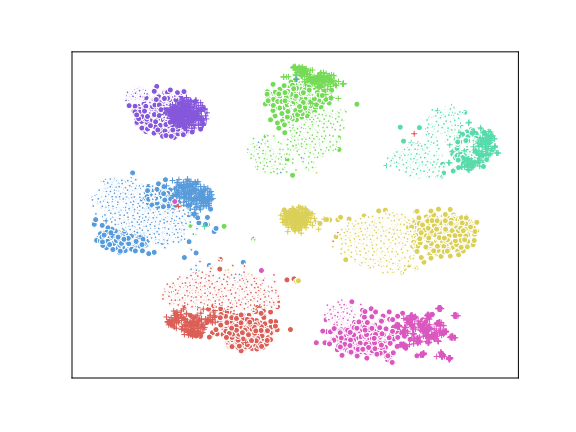}
	}
	\subfigure[Mixup]{
		\label{fig:cimgvis2}
		\includegraphics[width=0.15\textwidth]{./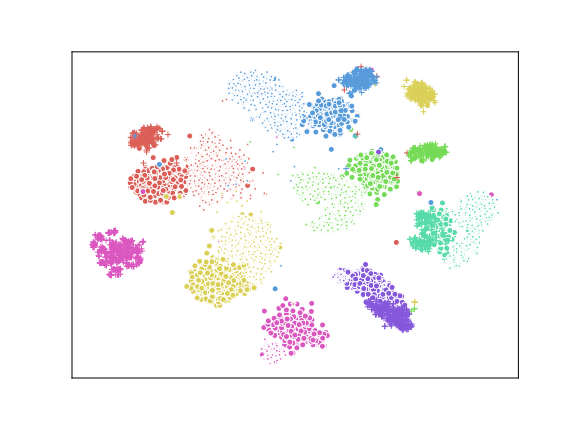}
	}
	\subfigure[\cite{verma2019manifold}]{
		\label{fig:cimgvis3}
		\includegraphics[width=0.15\textwidth]{./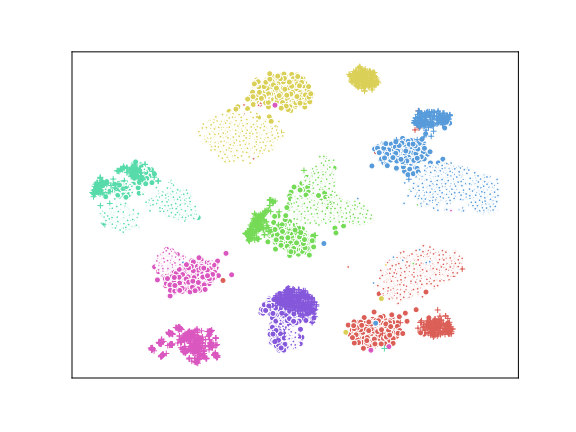}
	}
	\subfigure[Invar. fea.]{
		\label{fig:cimgvis5}
		\includegraphics[width=0.15\textwidth]{./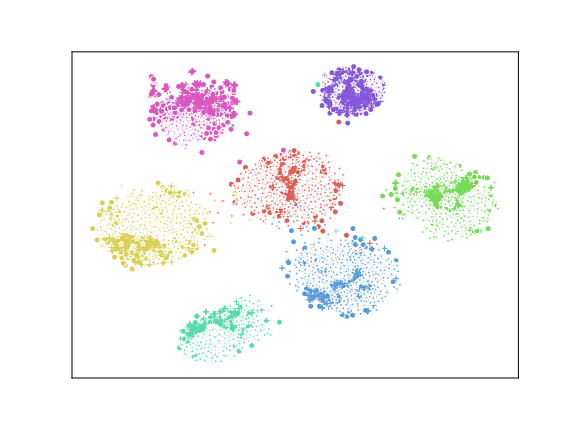}
	}
	\subfigure[Large margin]{
		\label{fig:cimgvis6}
		\includegraphics[width=0.15\textwidth]{./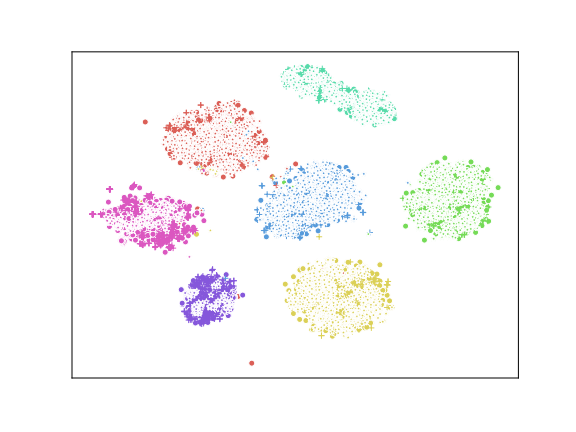}
	}	
	\subfigure[Our method]{
		\label{fig:cimgvis4}
		\includegraphics[width=0.15\textwidth]{./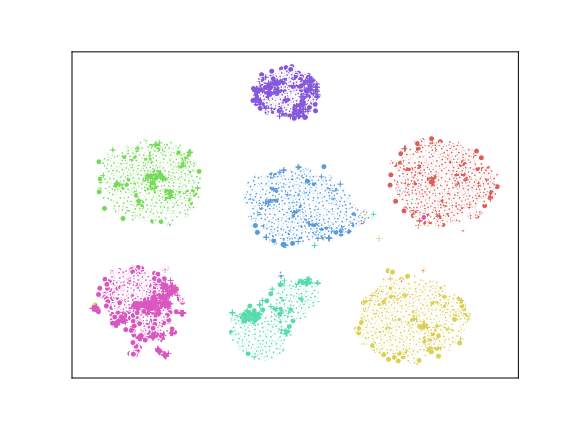}
	}
  \vspace{-.2cm}
	\caption{Visualization of the t-SNE embeddings of learned feature spaces for PACS with different methods. Different colors correspond to different classes and different shapes correspond to different domains. \emph{Best viewed in color and zoom in.}}
  \vspace{-.5cm}
\end{figure*}


	

\subsection{More Analysis}
\label{sec-exp-exten}

\subsubsection{Extensibility}

To demonstrate that \methodb is extensible, we replace the adversarial learning module in \methodb with CORAL loss~\cite{sun2016deep} to learn domain-invariant features. 
For better comparison, we do not use large margin and we denote it as \methodb-CORAL. 
We compare it with CORAL and Vanilla Mixup on two benchmarks: Office-Home and Cross-Person. 
From \figurename~\ref{fig:fixcoraloff} and \figurename~\ref{fig:fixcoralhar}, we can see \methodb-CORAL achieves the best results on both benchmarks compared with CORAL and vanilla Mixup, which demonstrates \method~is a general approach for domain generation. 
In addition, in \figurename~\ref{fig:fixcoraloff}, it can be observed that \methodb-CORAL without large margin loss almost achieves the same performance as \method with large margin loss. 
It may be caused by that CORAL performs better than adversarial training on Office-Home (rf. \tablename~\ref{tab:my-table-img}), thereby obtaining improved domain-invariant features.

\begin{figure*}[htbp]
	\centering
	\subfigure[Office-Home]{
		\label{fig:fixcoraloff}
		\includegraphics[width=0.22\columnwidth]{./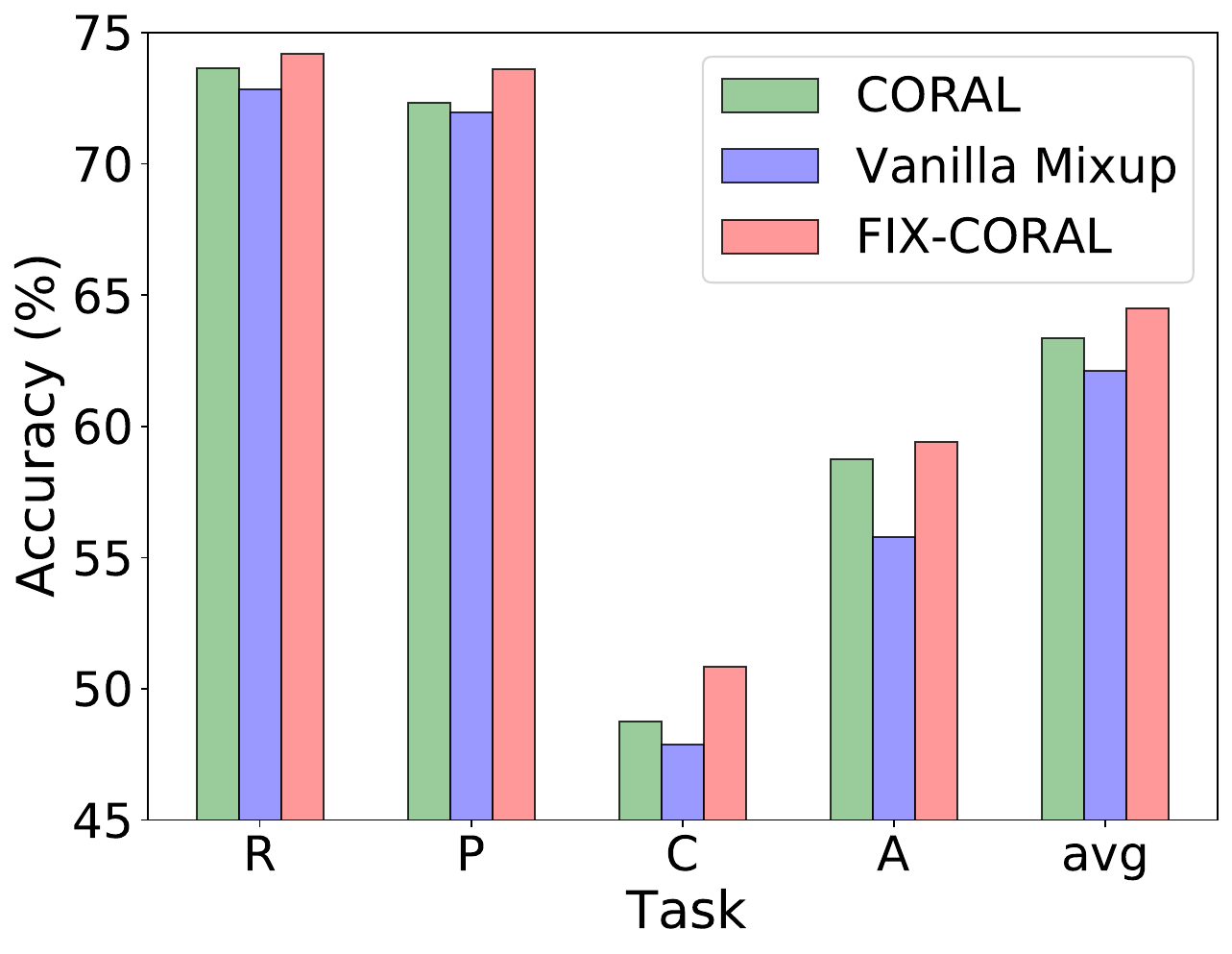}
	}
	\subfigure[Cross-Person HAR]{
		\label{fig:fixcoralhar}
		\includegraphics[width=0.22\columnwidth]{./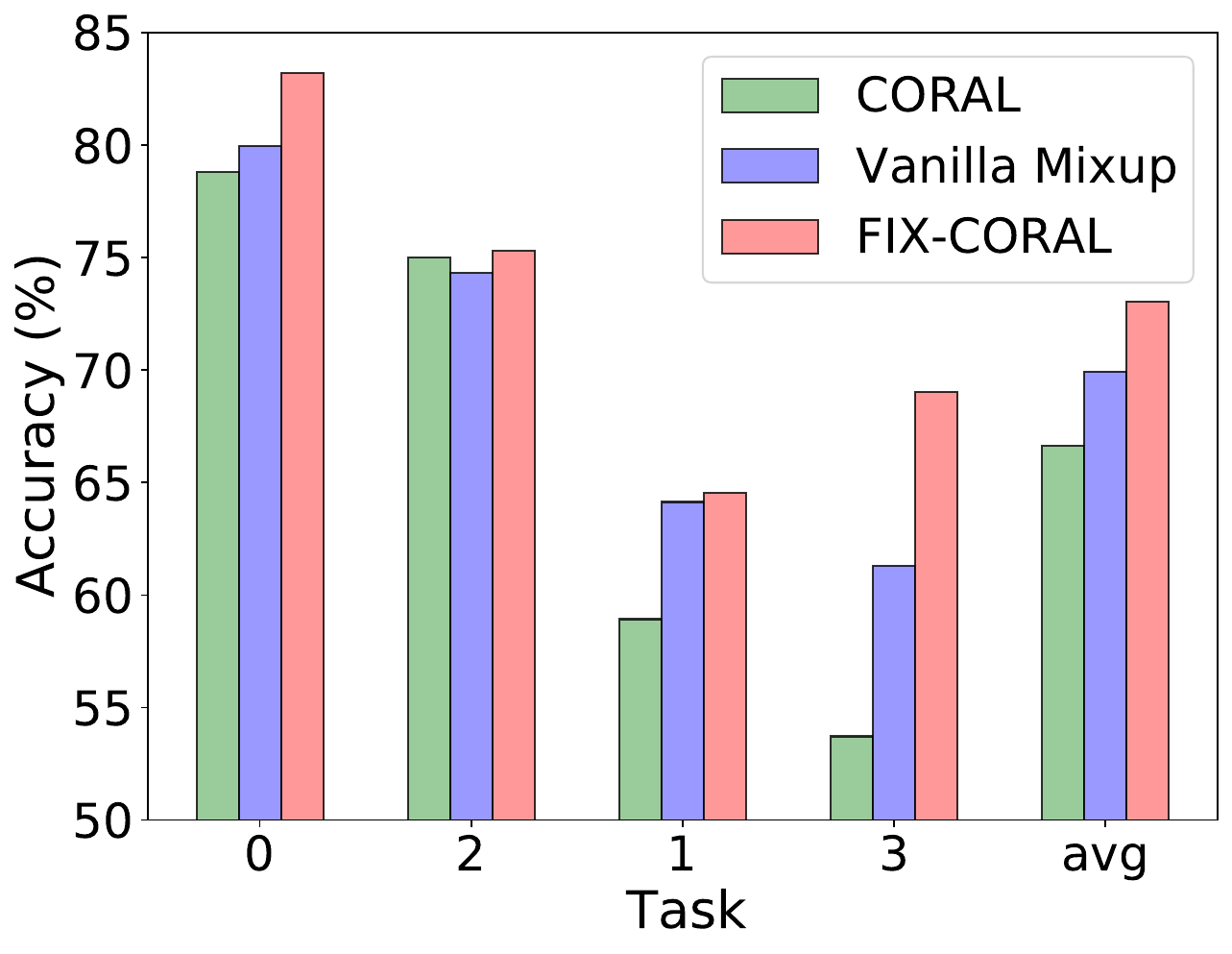}
	}
	\subfigure[PACS]{
		\label{fig:threepacs}
		\includegraphics[width=0.22\columnwidth]{./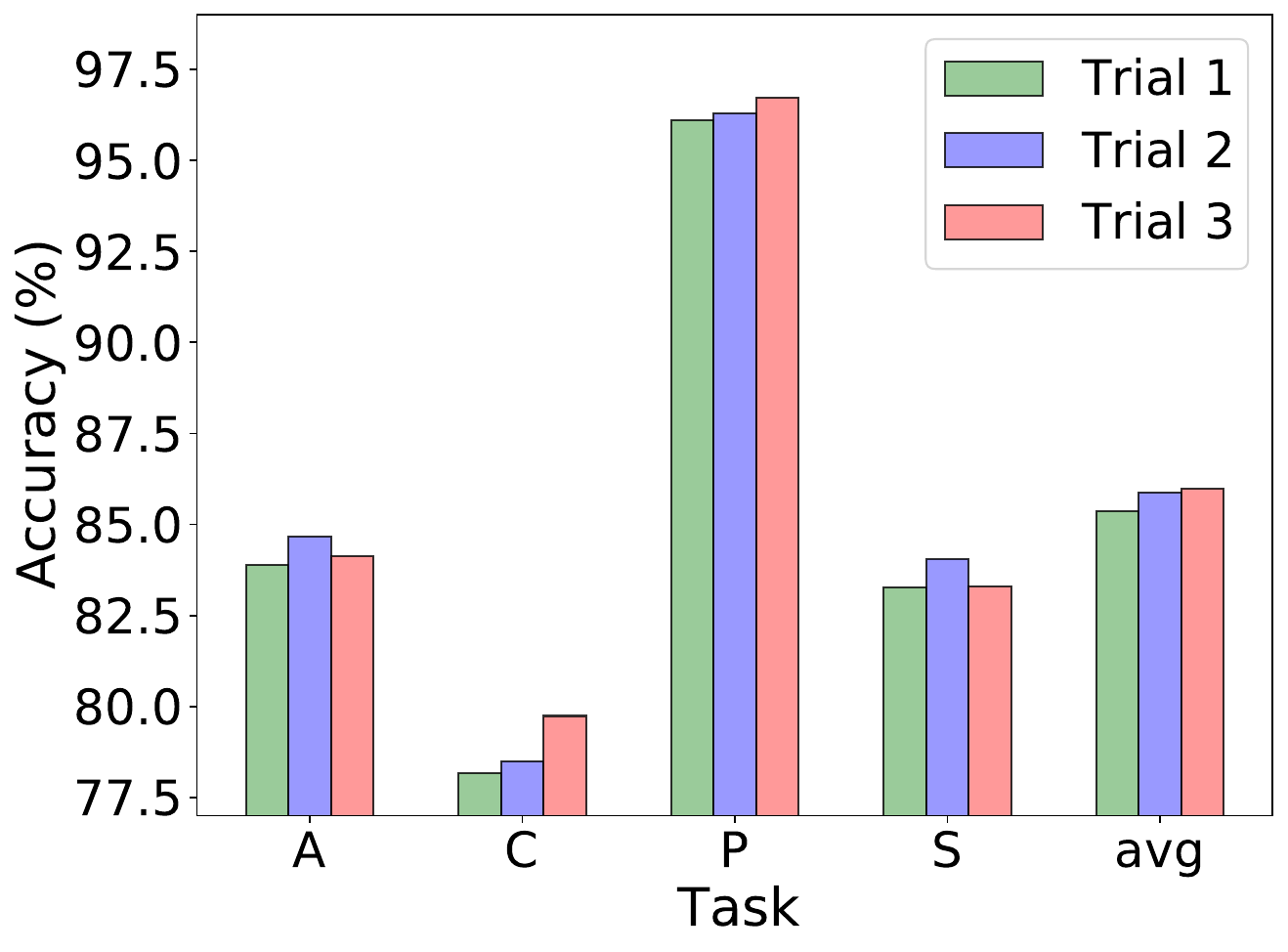}
	}
	\subfigure[Cross-Person HAR]{
		\label{fig:threeusc}
		\includegraphics[width=0.22\columnwidth]{./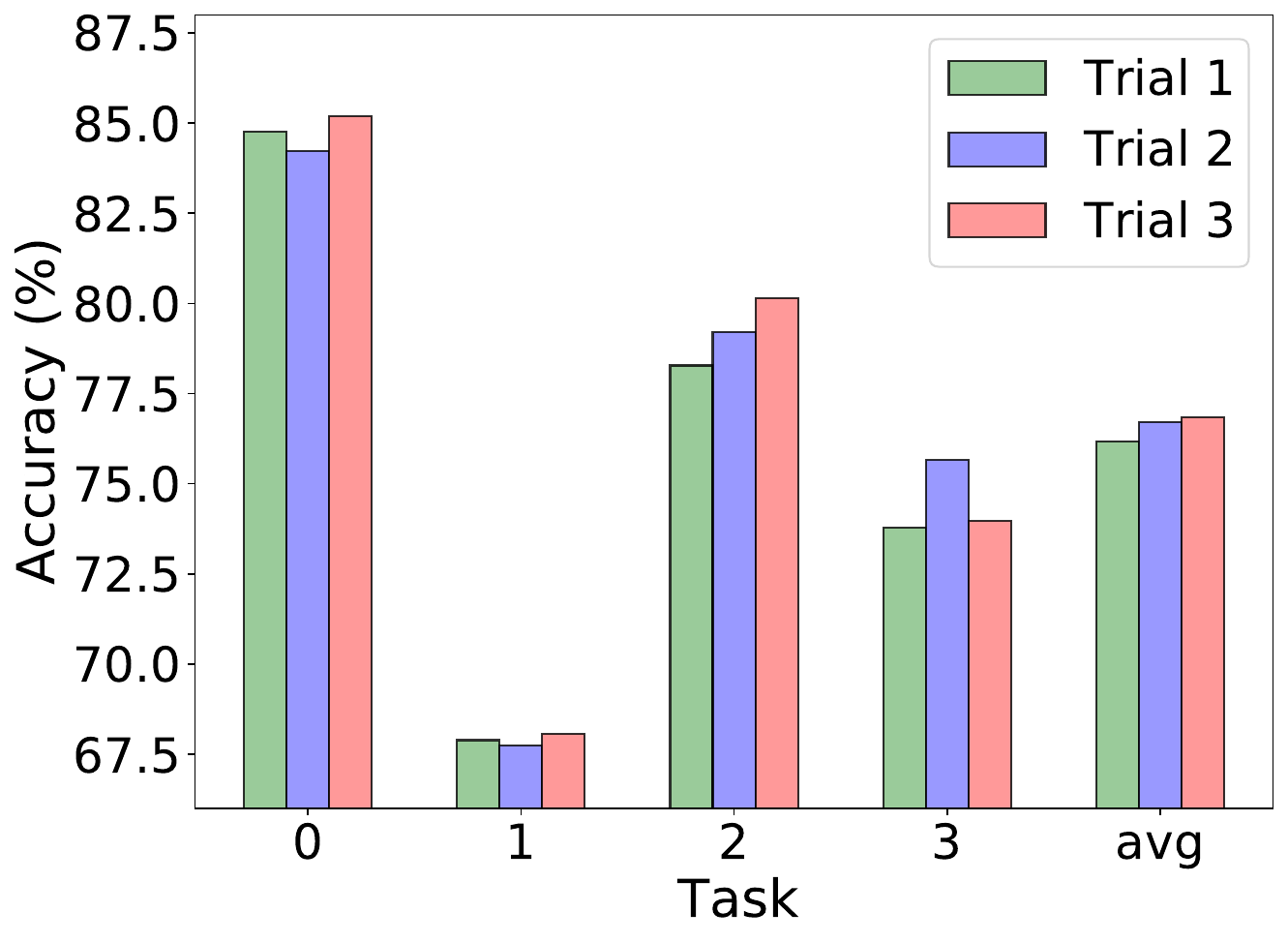}
	}
	\caption{\figurename~\ref{fig:fixcoraloff}-\ref{fig:fixcoralhar} are results of \method with CORAL extension while \figurename~\ref{fig:threepacs}-\ref{fig:threeusc} are results on cross-person HAR and PACS with three trials to show the robustness of our algorithm.}
	\label{fig:fixext}
\end{figure*}

\subsubsection{Robustness}
Our method involves Mixup strategy and random domain splits which may introduce instabilities.
In this section, we evaluate its robustness.
\figurename~\ref{fig:threepacs} and \figurename~\ref{fig:threeusc} demonstrate that our method is robust against random seeds and different domain splits in several trials. 
This implies that our method can be easily applied to real applications.


\section{Conclusions and Future Work}
In this paper, we proposed \method, a general approach for domain generalization.
\method performs Mixup on domain-invariant features to increase diversity by discerning domain and class information. 
To mitigate the noisy synthetic data problem in Mixup, we introduced the large margin loss.
\method can be embedded in many existing DG methods, and we presented implementations based on DANN and CORAL.
We provided theoretical insights to our algorithm.
Extensive experiments on seven datasets across two modalities demonstrated that \method yielded SOTA results on all datasets. 
In the future, we plan to incorporate our approach into representation learning and meta-learning DG methods to improve performance and deploy it on more applications.

\bibliography{ref}


\appendix

\section{Broader Impact Concerns}
FIXED introduces innovative data augmentation techniques to enhance machine learning model robustness and generalization. 
However, we must consider its real-world implications, including data privacy issues, potential biases from training data, and fairness concerns in model outcomes. 
Evaluating its benefits in diverse real-world scenarios and addressing its susceptibility to adversarial attacks are essential. 
By holistically assessing these concerns, we aim to ensure the responsible and ethical application of FIXED, maximizing its positive impact while minimizing adverse consequences.

\section{Related Work}
\subsection{Domain Generalization}
DG aims to learn a model from a single or multiple source domains to generalize well to unseen target domains. 
According to~\cite{wang2021generalizing}, existing DG methods can be divided into three groups: 1) representation learning~\cite{jin2020style,li2018deep,peng2019domain}, 2) learning strategy~\cite{dou2019domain,shi2021gradient}, and 3) data manipulation~\cite{nazari2020domain,zhou2020deep,qiao2020learning,yao2022improving}.

Representation learning is one of the most common approaches for domain adaptation and domain generalization. 
Since DG can be viewed as an extension of DA, many traditional DA methods can be applied to DG~\cite{ganin2016domain,sun2016deep,li2018domain1}.
Domain-adversarial neural network (DANN)~\cite{ganin2016domain} learns domain-invariant features via adversarial training of the generator and the discriminator. 
The discriminator aims to distinguish the domains, while the generator aims to fool the discriminator to learn domain invariant feature representations. 
Deep Coral~\cite{sun2016deep} learns a nonlinear transformation that aligns correlations of
layer activations in deep neural networks. 
MMD-AAE~\cite{li2018domain1} imposes the Maximum Mean Discrepancy (MMD) measure to align the distributions among different domains, and matches the aligned distribution to an arbitrary prior distribution via adversarial feature learning. 
These methods all attempt to learn feature representations that are supposed to be universal to the seen source domains, and are expected to generalize well on the target domain.
There are also other methods for domain-invariant learning on DG~\cite{mahajan2021domain,arjovsky2019invariant}. 
Recent research works point out that just learning domain-invariant feature representation may be not enough for DA~\cite{zhao2019learning} and DG~\cite{bui2021exploiting}. 
Therefore, \cite{bui2021exploiting} proposed a novel theoretically sound framework - mDSDI - to further capture the usefulness of domain-specific information. 
The proposed \method method is another approach to exploit additional information beyond the domain-invariant feature representation.

Learning strategy is another group for DG approaches. 
In this group, methods focus on exploiting the general learning strategy to enhance model generalization capability. 
MLDG~\cite{li2018learning} proposed a model agnostic training procedure for DG that simulates train/test domain shift during training by synthesizing virtual testing domains within each mini-batch. 
The meta-optimization objective requires that steps to improve training domain performance must also improve test domain performance.  Fish~\cite{shi2021gradient} utilizes an inter-domain gradient matching objective that targets domain generalization by maximizing the inner product between gradients from different domains. By forcing the gradient direction to be invariant for different domains, it can be generalized to unseen targets. 
Similar to Fish, \cite{mansilla2021domain} conjectured that conflicting gradients within each mini-batch contain information specific to the individual domains which are irrelevant to others when training with multiple domains. 
It characterizes the conflicting gradients and devised novel gradient agreement strategies based on gradient surgery to alleviate such disagreements to improve generalization.

The data manipulation group of approaches is the most closely related to our work. Thus, we elaborate on it in the next subsection.


\subsection{Data Augmentation and Mixup for DG}
Data augmentation is used in DG through either domain randomization~\cite{yue2019domain}, self-supervised learning (e.g., JiGen~\cite{carlucci2019domain}), adversarial augmentation (e.g., CrossGrad~\cite{shankar2018generalizing}), or Mixup~\cite{zhang2018mixup}.
In \cite{yue2019domain}, an approach for domain randomization and pyramid consistency is proposed to learn a model with high generalizability. 
In particular, it randomizes the synthetic images with the styles of real images in terms of visual appearance using auxiliary datasets.
JeGen~\cite{carlucci2019domain} learns the semantic labels in a supervised fashion, and broadens its understanding of the data by learning from self-supervised signals to solve a jigsaw puzzle on the same images. This helps the network to learn the concepts of spatial correlation while acting as a regularizer for the classification task.
CrossGrad~\cite{shankar2018generalizing} trains a label and a domain classifier in parallel on examples perturbed by loss gradients of each other's objectives.
Recently, SFA~\cite{li2021simple}, a simple feature augmentation method, attempts to perturb the feature embedding with Gaussian noise during training.

Mixup~\cite{zhang2018mixup} is a simple but effective technique to increase data diversity. 
It extends the training distribution by incorporating the intuition that linear interpolations of feature vectors shall lead to linear interpolations of the associated targets. 
There are several variants of Mixup.
Manifold Mixup~\cite{verma2019manifold} is designed to encourage neural networks to predict less confidently on interpolations of hidden representations, which leverages semantic interpolations as additional training signals.
Note that manifold Mixup is a common approach while both the vanilla Mixup and \methodb can be regarded as specific implementations of manifold Mixup for different purposes.
CutMix~\cite{yun2019cutmix} replaces the removed regions with a patch from another image and mixed the ground truth labels proportionally to the number of pixels of combined images. 
Puzzle Mix~\cite{kim2020puzzle} explicitly utilizes the saliency information and the underlying statistics of the natural examples to prevent misleading supervisory signals.

Since Mixup is a natural way for data augmentation, many variants of Mixup have been proposed for DA and DG.
Recent works~\cite{xu2020adversarial,wu2020dual} use the vanilla Mixup method for domain adaptation without modification.
DM-ADA~\cite{xu2020adversarial}, a Mixup-based method for DA, utilizes domain mixup on the pixel level and the feature level to improve model robustness.
It guarantees domain-invariance in a more continuous latent space, and guides the domain discriminator to judge sample difference between the source and target domains. 
Mixstyle~\cite{zhou2021domain} increases the image style information to enhance the diversity of domains without change to classification labels. 
FACT~\cite{xu2021fourier} mixes the amplitude spectrum of images after Fourier transform to force the model to capture phase information.
Wang et al.~\cite{wang2020heterogeneous} mixes up samples in multiple domains with two different sampling strategies.

Existing methods generally ignore the domain-invariant features and discrimination of Mixup.
In addition, they are designed for specific tasks and need to be modified for each application domain. \method can address these limitations.

\section{Analytical Evaluation}
\subsection{Background}

For a distribution $\mathbb{P}$ with an ideal binary labeling function $h^*$ and a hypothesis $h$, we define the error $\varepsilon_{\mathbb{P}}(h)$ in accordance with~\cite{ben2010theory} as: 
\begin{equation}
    \label{eqa:def-eps}
    \varepsilon_{\mathbb{P}}(h) = \mathbb{E}_{\mathbf{x}\sim \mathbb{P}} |h(\mathbf{x}) - h^*(\mathbf{x})|.
\end{equation}
The error of Mixup is given as~\cite{carratino2020mixup}:
\begin{equation}
    \label{eqa:def-epsmix}
\varepsilon^{\text{Mixup}}(h) = \frac{1}{n^2}\sum_{i=1}^n\sum_{j=1}^n 
 \mathbb{E}_{\lambda} \ell(h(\lambda\mathbf{x}_i +(1-\lambda)\mathbf{x}_j), \lambda y_i +(1-\lambda)y_j),
\end{equation}
where $\lambda\sim Beta(\alpha, \alpha)$ and $n$ is the number of data samples.

We also give the definition of $\mathcal{H}$-divergence in accordance with~\cite{ben2010theory}.
Given two distributions $\mathbb{P}, \mathbb{Q}$ over a space $\mathcal{X}$ and a hypothesis class $\mathcal{H}$, 
\begin{equation}
    \label{eqa:def-hdive}
    d_{\mathcal{H}}(\mathbb{P},\mathbb{Q}) = 2\sup_{h\in \mathcal{H}} |Pr_{\mathbb{P}}(I_h)- Pr_{\mathbb{Q}}(I_h)|,
\end{equation}
where $I_h = \{\mathbf{x}\in\mathcal{X}|h(\mathbf{x})=1\}$.
We often consider the $\mathcal{H}\Delta \mathcal{H}$-divergence in~\cite{ben2010theory} where the symmetric difference
hypothesis class $\mathcal{H}\Delta \mathcal{H}$ is the set of functions characterized by disagreements between hypotheses.

For any $n\in \mathbb{N}, [n] = \{1, \cdots, n\}$ is the set of nonzero integers up to $n$.
For any $\alpha, \beta >0, [a,b]\subset [0,1]$, $Beta_{[a,b]}(\alpha, \beta)$ denotes the truncated Beta distribution on $[a,b]$.
$j \sim Unif([n])$ represents uniform random sampling.

\begin{theorem}
\label{thm:da,ben}
(Theorem 2.1 in~\cite{sicilia2021domain}, modified from Theorem 2 in~\cite{ben2010theory}). Let $\mathcal{X}$ be a space and $\mathcal{H}$ be a class of hypotheses corresponding to this space. Suppose $\mathbb{P}$ and $\mathbb{Q}$ are distributions over $\mathcal{X}$. Then, for any $h\in \mathcal{H}$, the following holds
\begin{equation}
    \label{eqa:da}
    \varepsilon_{\mathbb{Q}}(h) \leq \lambda'' + \varepsilon_{\mathbb{P}}(h)+ \frac{1}{2} d_{\mathcal{H}\Delta \mathcal{H}}(\mathbb{Q}, \mathbb{P})
\end{equation}
with $\lambda''$ the error of an ideal joint hypothesis for $\mathbb{Q}$ and $\mathbb{P}$.
\end{theorem}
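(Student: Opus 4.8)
The plan is to follow the classical argument of Ben-David \etal, treating the $L^1$-style error as a pseudo-metric on the hypothesis space. For functions $h,h':\mathcal{X}\to\{0,1\}$ and any distribution $\mathbb{D}$ over $\mathcal{X}$, I would extend the notation of \eqref{eqa:def-eps} by writing $\varepsilon_{\mathbb{D}}(h,h')=\mathbb{E}_{\mathbf{x}\sim\mathbb{D}}|h(\mathbf{x})-h'(\mathbf{x})|$, so that $\varepsilon_{\mathbb{D}}(h)=\varepsilon_{\mathbb{D}}(h,h^*_{\mathbb{D}})$ with $h^*_{\mathbb{D}}$ the labeling function associated with $\mathbb{D}$. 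The whole argument then rests on two elementary facts: (i) a triangle inequality for $\varepsilon_{\mathbb{D}}(\cdot,\cdot)$, and (ii) a bound on how much the disagreement mass of a \emph{fixed} pair of hypotheses can change when the distribution is swapped from $\mathbb{P}$ to $\mathbb{Q}$, expressed through $d_{\mathcal{H}\Delta\mathcal{H}}$.

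For (i), the pointwise inequality $|h(\mathbf{x})-h''(\mathbf{x})|\le|h(\mathbf{x})-h'(\mathbf{x})|+|h'(\mathbf{x})-h''(\mathbf{x})|$, followed by taking expectations under $\mathbb{D}$, gives $\varepsilon_{\mathbb{D}}(h,h'')\le\varepsilon_{\mathbb{D}}(h,h')+\varepsilon_{\mathbb{D}}(h',h'')$ for all $h,h',h''$. For (ii), I would fix $h,h'\in\mathcal{H}$ and note that the disagreement region $\{\mathbf{x}:h(\mathbf{x})\neq h'(\mathbf{x})\}$ equals $I_g$ for some $g\in\mathcal{H}\Delta\mathcal{H}$ and that $\varepsilon_{\mathbb{D}}(h,h')=\Pr_{\mathbb{D}}(I_g)$ since the labels are $\{0,1\}$-valued; hence, directly from the definition in \eqref{eqa:def-hdive}, $|\varepsilon_{\mathbb{P}}(h,h')-\varepsilon_{\mathbb{Q}}(h,h')|=|\Pr_{\mathbb{P}}(I_g)-\Pr_{\mathbb{Q}}(I_g)|\le\sup_{g'\in\mathcal{H}\Delta\mathcal{H}}|\Pr_{\mathbb{P}}(I_{g'})-\Pr_{\mathbb{Q}}(I_{g'})|=\frac{1}{2}d_{\mathcal{H}\Delta\mathcal{H}}(\mathbb{Q},\mathbb{P})$.

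With these in hand I would let $h^*$ be an ideal joint hypothesis, i.e. a minimizer of $\varepsilon_{\mathbb{P}}(h)+\varepsilon_{\mathbb{Q}}(h)$ over $\mathcal{H}$, and set $\lambda''=\varepsilon_{\mathbb{P}}(h^*)+\varepsilon_{\mathbb{Q}}(h^*)$. Applying (i) to route $\varepsilon_{\mathbb{Q}}(h)$ through $h^*$ gives $\varepsilon_{\mathbb{Q}}(h)\le\varepsilon_{\mathbb{Q}}(h,h^*)+\varepsilon_{\mathbb{Q}}(h^*)$; then adding and subtracting $\varepsilon_{\mathbb{P}}(h,h^*)$ and applying (ii) replaces $\varepsilon_{\mathbb{Q}}(h,h^*)$ by $\varepsilon_{\mathbb{P}}(h,h^*)$ at the cost of $\frac{1}{2}d_{\mathcal{H}\Delta\mathcal{H}}(\mathbb{Q},\mathbb{P})$; finally one more use of (i), splitting $\varepsilon_{\mathbb{P}}(h,h^*)\le\varepsilon_{\mathbb{P}}(h)+\varepsilon_{\mathbb{P}}(h^*)$ through the labeling function of $\mathbb{P}$, yields $\varepsilon_{\mathbb{Q}}(h)\le\varepsilon_{\mathbb{P}}(h)+\varepsilon_{\mathbb{P}}(h^*)+\varepsilon_{\mathbb{Q}}(h^*)+\frac{1}{2}d_{\mathcal{H}\Delta\mathcal{H}}(\mathbb{Q},\mathbb{P})=\lambda''+\varepsilon_{\mathbb{P}}(h)+\frac{1}{2}d_{\mathcal{H}\Delta\mathcal{H}}(\mathbb{Q},\mathbb{P})$, which is the claimed bound.

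I do not expect a genuine obstacle, since the statement is classical (essentially Theorem~2 of~\cite{ben2010theory} in the form of~\cite{sicilia2021domain}); the only place demanding care is the bookkeeping around labeling functions. If $\mathbb{P}$ and $\mathbb{Q}$ are permitted distinct labeling functions, the triangle-inequality steps through $h^*$ must be routed through the correct per-distribution labeling function, and it is precisely this mismatch that prevents $\lambda''$ from vanishing; if instead a shared labeling function is assumed, $\lambda''$ collapses to the approximation error of $\mathcal{H}$. I would also re-verify the factor of $2$ in \eqref{eqa:def-hdive}, so that the coefficient $\frac{1}{2}$ in front of $d_{\mathcal{H}\Delta\mathcal{H}}$ comes out exactly as stated rather than off by a constant.
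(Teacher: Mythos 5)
Your argument is correct and is exactly the classical proof of Theorem 2 in~\cite{ben2010theory} that the paper relies on by citation (the paper itself states this result without proof): the triangle inequality for $\varepsilon_{\mathbb{D}}(\cdot,\cdot)$, the identification of the disagreement region of a pair of hypotheses with $I_g$ for $g\in\mathcal{H}\Delta\mathcal{H}$ to trade $\varepsilon_{\mathbb{Q}}(h,h^*)$ for $\varepsilon_{\mathbb{P}}(h,h^*)$ at cost $\frac{1}{2}d_{\mathcal{H}\Delta\mathcal{H}}(\mathbb{Q},\mathbb{P})$, and the definition $\lambda''=\varepsilon_{\mathbb{P}}(h^*)+\varepsilon_{\mathbb{Q}}(h^*)$ for the ideal joint hypothesis. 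Your closing caveats about per-distribution labeling functions and the factor of $2$ are handled correctly in the body of the argument, so nothing further is needed.
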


\theoremautorefname~\ref{thm:da,ben} provides an upper bound on the target-error.
$\lambda''$ is a property of the dataset and hypothesis class and is often ignored.
\theoremautorefname~\ref{thm:da,ben} demonstrates the necessity to learn domain invariant features.

\begin{proposition}
\label{pro:dg1}
(Proposition 3.1 in~\cite{sicilia2021domain}, modified from Proposition 2 in~\cite{albuquerque2020adversarial}). Let $\mathcal{X}$ be a space and $\mathcal{H}$ be a class of hypotheses corresponding to this space. Let $\mathbb{Q}$ and the collection $\{\mathbb{P}_i \}_{i=1}^M$ be distributions over $\mathcal{X}$ and let $\{\phi_i \}_{i=1}^M$ be a collection of non-negative coefficient with $\sum_i \phi_i = 1$. Let the object $\mathcal{O}$ be a set of distributions such that for every $\mathbb{S}\in \mathcal{O}$ the following holds
\begin{equation}
\label{eqa:dg1-con}
    \sum_i \phi_i d_{\mathcal{H}\Delta \mathcal{H}}(\mathbb{P}_i, \mathbb{S}) \leq \max_{i,j} d_{\mathcal{H}\Delta \mathcal{H}}(\mathbb{P}_i,\mathbb{P}_j).
\end{equation}
Then, for any $h\in \mathcal{H}$, 
\begin{equation}
    \label{eqa:dg1}
    \begin{aligned}
    \varepsilon_{\mathbb{Q}}(h)\leq \lambda_{\phi} + \sum_i \phi_i \varepsilon_{\mathbb{P}_i}(h) + \frac{1}{2}\min_{\mathbb{S}\in\mathcal{O}}  d_{\mathcal{H}\Delta \mathcal{H}}(\mathbb{S}, \mathbb{Q}) + \frac{1}{2}\max_{i,j} d_{\mathcal{H}\Delta \mathcal{H}}(\mathbb{P}_i, \mathbb{P}_j)
    \end{aligned}
\end{equation}
where $\lambda_{\phi} = \sum_i \phi_i \lambda_i$ and each $\lambda_i$ is the error of an ideal joint hypothesis for $\mathbb{Q}$ and $\mathbb{P}_i$. 
\end{proposition}

Proposition~\ref{pro:dg1} gives an upper bound for domain generalization.
In the right-hand side of \equationname~\ref{eqa:dg1}, the first term can be ignored and the second term is a convex combination of the source errors. The third term demonstrates the importance of diverse source distributions so that the unseen target $\mathbb{Q}$ might be near $\mathcal{O}$ while the final term is a maximum over the source-source divergences.
As shown in \figurename~\ref{fig:insight2}, the area with yellow and green is the possible $\mathcal{O}$.

\subsection{\method has Larger Distribution Coverage}
\label{app-sec-proof}

We provide proofs of propositions here.

\begin{proposition}
\label{app-pro:dg2}
Let $\mathcal{X}$ be a space and $\mathcal{H}$ be a class of hypotheses corresponding to this space. Let $\mathbb{Q}$ and the collection $\{\mathbb{P}_i \}_{i=1}^M$ be distributions over $\mathcal{X}$ and let $\{\phi_i \}_{i=1}^M$ be a collection of non-negative coefficient with $\sum_i \phi_i = 1$. Let the object $\mathcal{O}'$ be a set of distributions such that for every $\mathbb{S}\in \mathcal{O}'$ the following holds
\begin{equation}
\label{app-eqa:dg2-con}
     d_{\mathcal{H}\Delta \mathcal{H}}(\sum_i \phi_i\mathbb{P}_i, \mathbb{S}) \leq \max_{i,j} d_{\mathcal{H}\Delta \mathcal{H}}(\mathbb{P}_i,\mathbb{P}_j).
\end{equation}
Then, for any $h\in \mathcal{H}$, 
\begin{equation}
    \label{app-eqa:dg2}
    \begin{aligned}
    \varepsilon_{\mathbb{Q}}(h)\leq \lambda' + \sum_i \phi_i \varepsilon_{\mathbb{P}_i}(h) + \frac{1}{2}\min_{\mathbb{S}\in\mathcal{O}'}  d_{\mathcal{H}\Delta \mathcal{H}}(\mathbb{S}, \mathbb{Q}) + \frac{1}{2}\max_{i,j} d_{\mathcal{H}\Delta \mathcal{H}}(\mathbb{P}_i, \mathbb{P}_j)
    \end{aligned}
\end{equation}
where $\lambda'$ is the error of an ideal joint hypothesis. 
\end{proposition}

\begin{proof}
On one hand, with \theoremautorefname~\ref{thm:da,ben}, we have 
\begin{equation}
    \label{app-eqa:p1s1}
    \varepsilon_{\mathbb{Q}}(h) \leq \lambda_1 +\varepsilon_{\mathbb{S}}(h)+ \frac{1}{2}d_{\mathcal{H}\Delta \mathcal{H}}(\mathbb{S}, \mathbb{Q}), \forall h\in \mathcal{H}, \forall\mathbb{S}\in \mathcal{O}'.
\end{equation}

On the other hand, with \theoremautorefname~\ref{thm:da,ben}, we have 
\begin{equation}
    \label{app-eqa:p1s2}
    \varepsilon_{\mathbb{S}}(h) \leq \lambda_2 +\varepsilon_{\sum_i \phi_i\mathbb{P}_i}(h)+ \frac{1}{2}d_{\mathcal{H}\Delta \mathcal{H}}(\sum_i \phi_i\mathbb{P}_i, \mathbb{S}), \forall h\in \mathcal{H}.
\end{equation}

Since $\varepsilon_{\sum_i \phi_i\mathbb{P}_i}(h) = \sum_i \phi_i\varepsilon_{\mathbb{P}_i}(h)$, and $d_{\mathcal{H}\Delta \mathcal{H}}(\sum_i \phi_i\mathbb{P}_i, \mathbb{S}) \leq \max_{i,j} d_{\mathcal{H}\Delta \mathcal{H}}(\mathbb{P}_i,\mathbb{P}_j)$, we have
\begin{equation}
    \label{app-eqa:p1s3}
    \begin{aligned}
    \varepsilon_{\mathbb{Q}}(h) \leq  \lambda' + \sum_i \phi_i \varepsilon_{\mathbb{P}_i}(h)+ \frac{1}{2}d_{\mathcal{H}\Delta \mathcal{H}}(\mathbb{S}, \mathbb{Q}) +\frac{1}{2}\max_{i,j}d_{\mathcal{H}\Delta \mathcal{H}}(\sum_i \phi_i\mathbb{P}_i, \mathbb{S}), \forall h\in \mathcal{H}, \forall\mathbb{S}\in \mathcal{O}'.
    \end{aligned}
\end{equation}

\equationname~\eqref{app-eqa:p1s3} for all $\mathbb{S}\in \mathcal{O}'$ holds. Proof ends.
\end{proof}

\begin{proposition}
\label{app-cor:rel}
Under the same conditions in \ref{app-pro:dg2},
\begin{equation}
     \mathcal{O}=\{S|\sum_i \phi_i d_{\mathcal{H}\Delta \mathcal{H}}(\mathbb{P}_i, \mathbb{S}) \leq \max_{i,j} d_{\mathcal{H}\Delta \mathcal{H}}(\mathbb{P}_i,\mathbb{P}_j)\},
\end{equation}
\begin{equation}
     \mathcal{O}'=\{S|d_{\mathcal{H}\Delta \mathcal{H}}(\sum_i \phi_i\mathbb{P}_i, \mathbb{S}) \leq \max_{i,j} d_{\mathcal{H}\Delta \mathcal{H}}(\mathbb{P}_i,\mathbb{P}_j)\},
\end{equation}
we have
\begin{equation}
    \label{app-eqa:rel}
    \mathcal{O}\subset \mathcal{O}'.
\end{equation}
\end{proposition}
\begin{proof}
On one hand, for any $\mathbb{S} \in \mathcal{O}$, we have 
\begin{equation}
\label{eapp-qa:p2s1}
    \sum_i \phi_i d_{\mathcal{H}\Delta \mathcal{H}}(\mathbb{P}_i, \mathbb{S}) \leq \max_{i,j} d_{\mathcal{H}\Delta \mathcal{H}}(\mathbb{P}_i,\mathbb{P}_j).
\end{equation}

On the other hand, with the triangle inequality, we have
\begin{equation}
\label{app-eqa:p2s2}
d_{\mathcal{H}\Delta \mathcal{H}}(\sum_i \phi_i\mathbb{P}_i, \mathbb{S})\leq\sum_i \phi_i d_{\mathcal{H}\Delta \mathcal{H}}(\mathbb{P}_i, \mathbb{S}).
\end{equation}

Combining these two inequalities, we have 
\begin{equation}
\label{app-eqa:p2s3}
d_{\mathcal{H}\Delta \mathcal{H}}(\sum_i \phi_i\mathbb{P}_i, \mathbb{S})\leq \max_{i,j} d_{\mathcal{H}\Delta \mathcal{H}}(\mathbb{P}_i,\mathbb{P}_j).
\end{equation}

Therefore, $\mathbb{S}\in \mathcal{O}'$. Proof completed.

\end{proof}

\section{Methodology}
\subsection{Comparisons to other methods}
Vanilla Mixup~\cite{zhang2018mixup} mixes all information including domains and classes for inputs but it only considers the outputs from classes, which leads to terrible virtual points. 
To make mixed inputs/features consistent with mixed outputs, \method generates new samples in domain-invariant features that are learned by DANN~\cite{ganin2015unsupervised} or CORAL~\cite{sun2016deep}. 
Different from MixStyle~\cite{zhou2021domain} and FACT~\cite{xu2021fourier} where more domains are generated, \method focuses on domain-invariant features and endeavors to make these features more diversified and discriminative.

\subsection{Novelty}
FIXED is a combination of several existing algorithms, which is not our original creation. 
Our novelty lies in analyzing the drawbacks of existing Mixup-based approaches and proposing a simple and effective remedy to solve them. 
That being said, combining existing parts is easy, but understanding their limitations to know why is the main thing. 
Therefore, this paper can be considered as an "insight" paper rather than one claiming advance of new algorithms.

To make it simple, our novelty lies in using a simple, theoretically-grounded, and effective approach to solve domain generalization problems. 
Now we articulate these novelties in more detail. 
a) In our method, Mixup is performed in specific-designed parts for domain generalization, which is totally different from Mixup~\cite{zhang2018mixup}, LISA~\cite{yao2022improving}, SDMix~\cite{lu2022semantic}, and some other strategies designed for DG. Extensive experiments and ablation analysis prove the superiority of this design. 
b) We find the limitations of vanilla Mixup when meeting DG and introduce large margin loss to solve the issues, which is ignored in existing DG methods. And experimental results prove the effectiveness. It is not a simple combination but a specific target solution. 
c) Besides empirical results, we also provide novel theoretical insights to support our motivation and the proposed method.

\subsection{Limitation}
Although \method has solved parts of the issues of vanilla Mixup, it still suffers from some other problems. 
For example, \method is not parameter-free, i.e., one should tune its hyperparameters to achieve the best performance, which is the common approach of deep learning algorithms. 
Then, while it can be perfectly applied to classification-based problems, more work should be done w.r.t. regression or forecasting problems since the prediction labels should be dealt with. 
Moreover, in Sec.6 of the main paper, we also provide some possible directions to make \method more complete.

\subsection{More discussion}
Occasionally, more specific and diversified extracted features can bring better performance, which means that both domain and class information can work in the IID situation. 
For example, features learned from A+P can work well on A or P in PACS. 
Some existing features endeavor to combine these two parts of information for personalization~\cite{bui2021exploiting}. 
However, in the OOD setting, domain-related information often interferes with models' capability on unseen targets where different distributions exist~\cite{li2018deep}. 

\section{Experimental Details}

\subsection{Dataset details}

The statistical information of each dataset is presented in \tablename~\ref{tb-data-imgs} and \tablename~\ref{tb-data-harss} respectively.

UCI daily and sports dataset (DSADS) consists of 19 activities collected from 8 subjects wearing body-worn sensors on 5 body parts. 
USC-SIPI human activity dataset (USC-HAD) composes of 14 subjects (7 male, 7 female, aged from 21 to 49) executing 12 activities with a sensor tied on the front right hip. 
UCI human activity recognition using smartphones data set (UCI-HAR) is collected by 30 subjects performing 6 daily living activities with a waist-mounted smartphone. 
PAMAP2 physical activity monitoring dataset (PAMAP2) contains data of 18 different physical activities, performed by 9 subjects wearing 3 sensors.

\begin{table}[!t]
\centering
\caption{Information on visual datasets.}
\label{tb-data-imgs}
\resizebox{1\linewidth}{!}{%
\begin{tabular}{clcccc}
\toprule
Dataset&Domain Names& Domain&Class& Samples of each domain& Total Samples\\
\midrule
Digits-DG&(M,MM,SVN,SYHN)&4&10&(600;600;600;600)&2,400\\
PACS&(A,C,P,S)&4&7&(2,048;2,344;1,670;3,929)&9,991\\
Office-Home&(A,C,P,R)&4&65&(2,427;4,365;4,439;4,357)&15,588\\
\bottomrule
\end{tabular}%
}
\end{table}

\begin{table}[!t]
\centering
\caption{Information on HAR datasets.}
\label{tb-data-harss}
\resizebox{0.5\textwidth}{!}{%
\begin{tabular}{crrrr}
\toprule
Dataset& Subjuects&Sensors&Classes& Samples\\
\midrule
DSADS&8&3&19&1,140,000\\
USC-HAD&14&2&12&5,441,000\\
UCI-HAR&30&2&6&1,310,000\\
PAMAP&9&3&18&3,850,505\\
\bottomrule
\end{tabular}%
}
\end{table}

\subsection{Details of settings on HAR}

The information of three settings on HAR are shown in \tablename~\ref{tb-data-hars}.

\begin{table}[!t]
\centering
\caption{Information on HAR in three settings.}
\label{tb-data-hars}
\resizebox{0.8\textwidth}{!}{%
\begin{tabular}{crrrrr}
\toprule
Setting& Domain&Sensor&Class&Samples of each domain& Total Samples\\
\midrule
X-Person&4&2&12&(1,401,400;1,478,000;1,522,800;1,038,800)&5,441,000\\
X-Position&5&3&19&(1,140,000)*5&5,700,000\\
X-Dataset&4&2&6&(672,000;810,550;514,950;470,850)&2,468,350\\
\bottomrule
\end{tabular}%
}
\end{table}

\subsection{Impmentation details on HAR}

We used different architectures for activity recognition.
The network contains two blocks, where each has one convolution layer, one pool layer, and one batch normalization layer. 
A single fully-connected layer is used as the bottleneck block while another fully-connected layer serves as the classifier. 
In each step, each domain selects 32 samples. 
The maximum training epoch is set to 150. 
For all methods except GILE, the Adam optimizer with a learning rate $10^{-2}$ and weight decay $5 \times 10^{-4}$ is used. 
We tune hyperparameters for each method and select their best results to report.
We report average results of three trials.

\begin{figure*}[!t]
	\centering
	\subfigure[$\alpha$]{
		\label{fig:sensmix}
		\includegraphics[width=0.22\textwidth]{./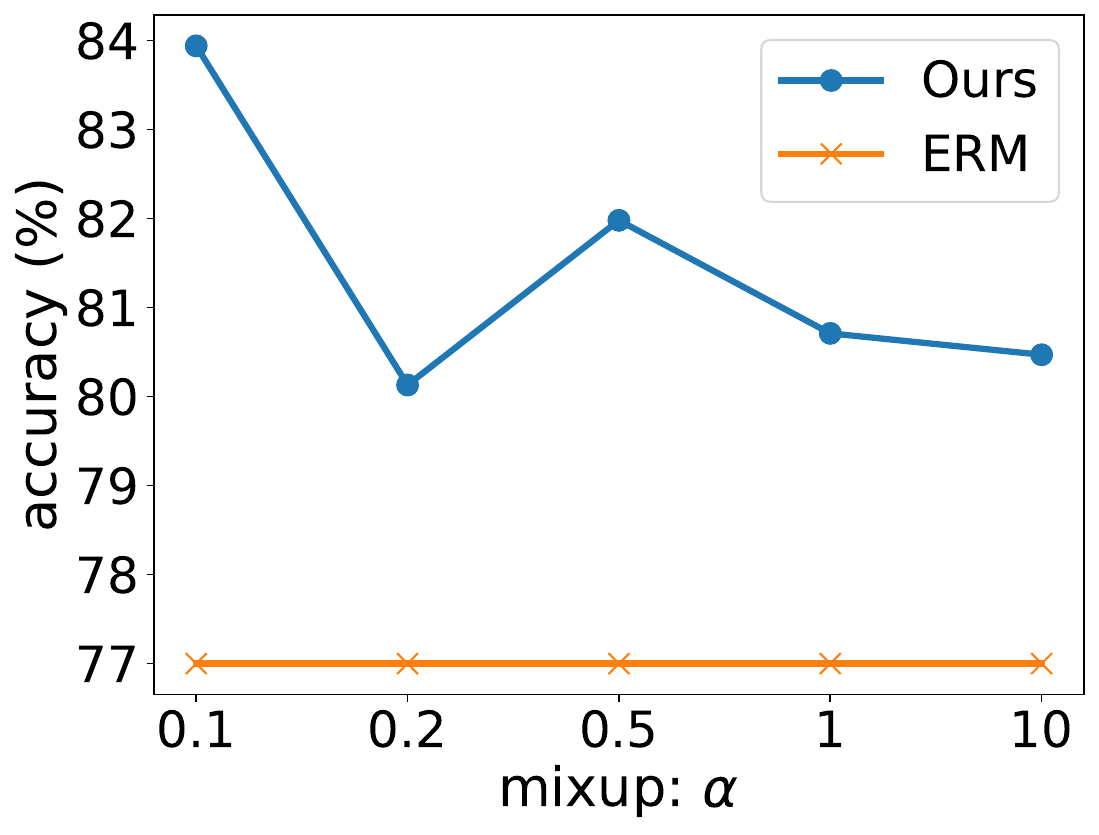}
	}
	\subfigure[$\eta$]{
		\label{fig:senseta}
		\includegraphics[width=0.22\textwidth]{./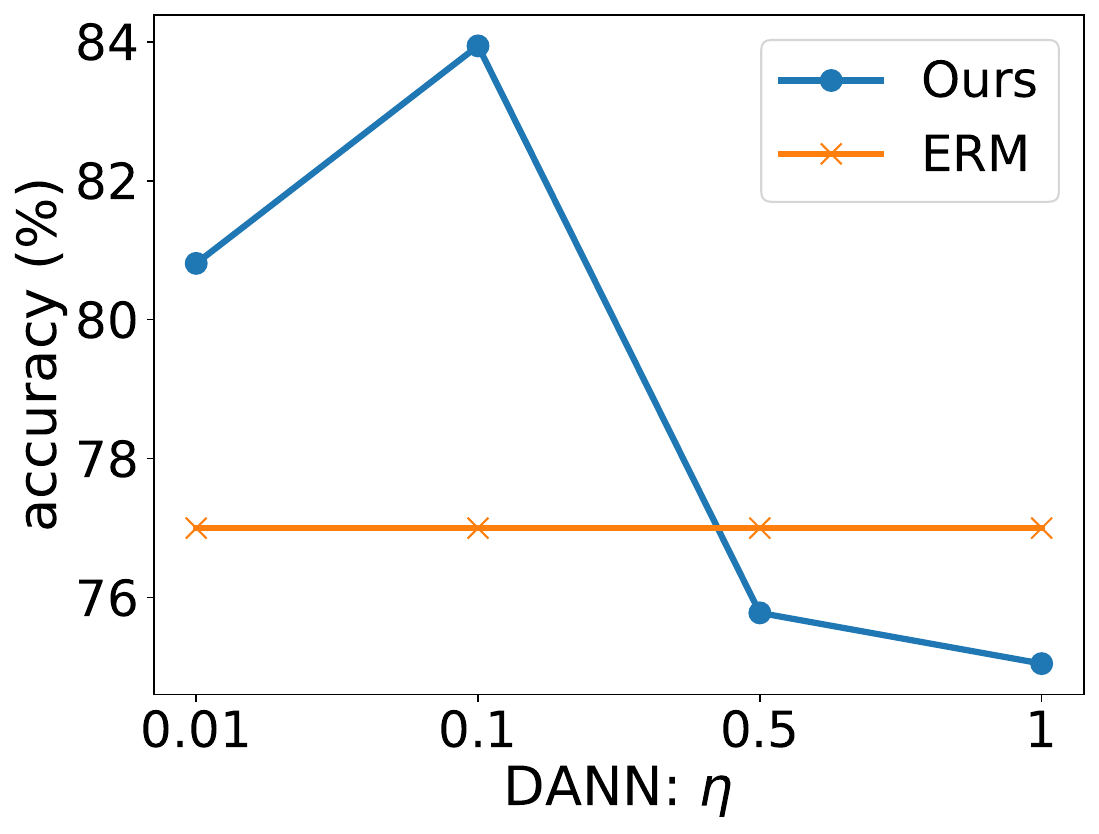}
	}
	\subfigure[$\gamma$]{
		\label{fig:sensgamma}
		\includegraphics[width=0.22\textwidth]{./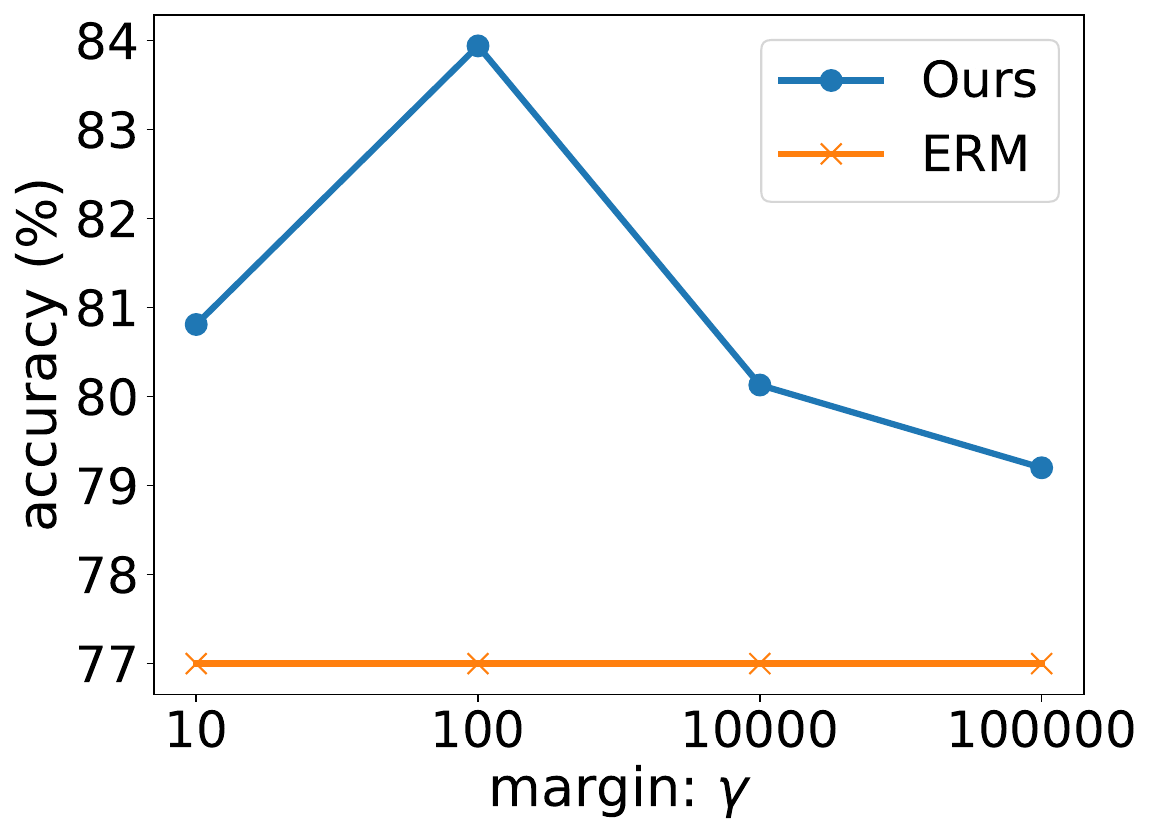}
	}
	\subfigure[Top $k$]{
		\label{fig:sensek}
		\includegraphics[width=0.22\textwidth]{./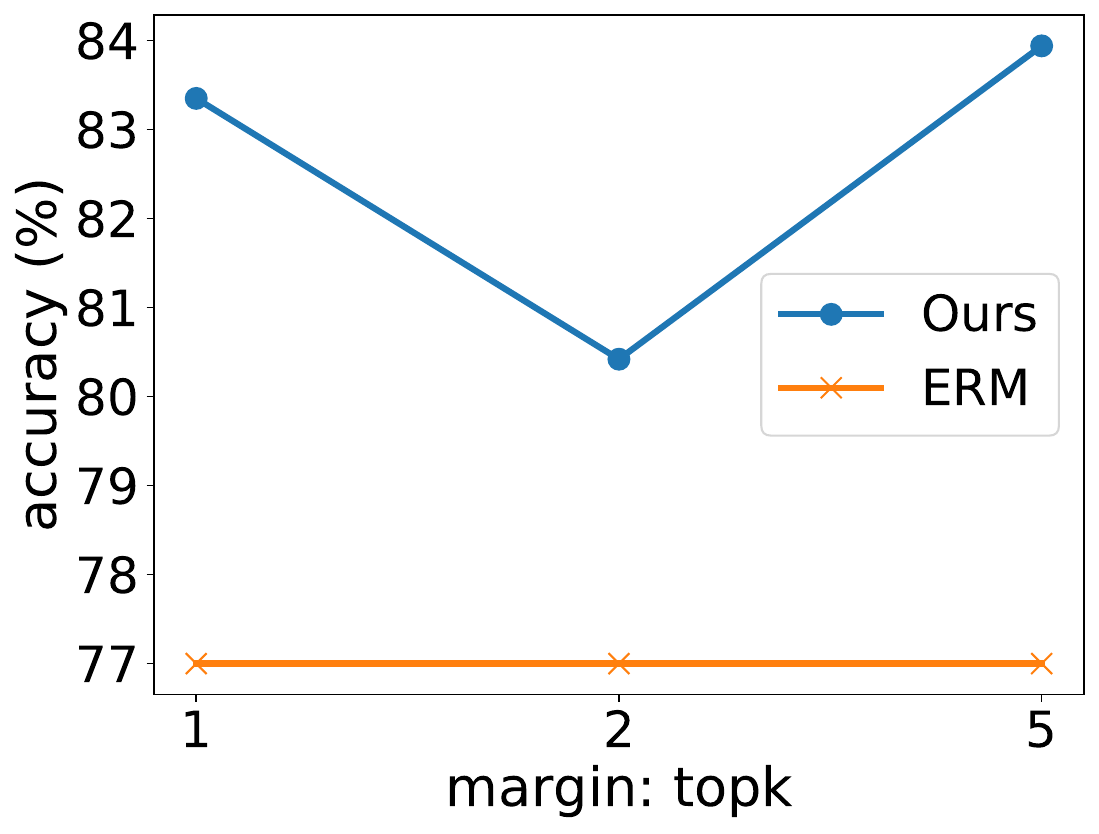}
	}
	\caption{Parameter sensitivity analysis.}
	\label{fig:img_sens}
\end{figure*}

\begin{figure*}[!t]
	\centering
    \includegraphics[width=0.6\textwidth]{./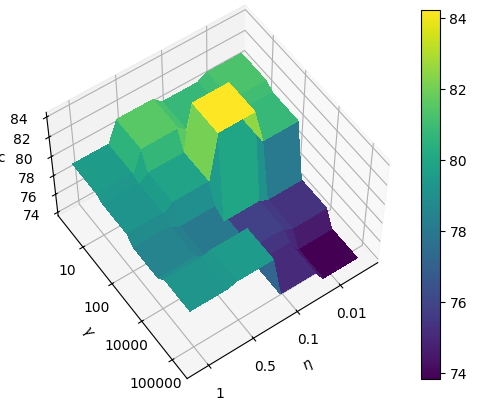}
	\caption{Parameter sensitivity analysis (2 variables).}
	\label{fig:img_sens-two-var}
\end{figure*}

\subsection{Parameter Sensitivity Analysis}

There are mainly four hyperparameters in our method: $\alpha$ for Beta distribution in Mixup, $\eta$ for the weight of adversarial learning, $\gamma$ for the required distance to boundaries in \equationname~\eqref{eqa:approx}, and top $k$ for the aggregation class number in \equationname~\eqref{eqa:approx}.
We evaluate the parameter sensitivity of our method in \figurename~\ref{fig:img_sens} where we change one parameter and fix the other to record the results.
From these results, we can see that our method achieves better performance in a wide range, demonstrating that our method is not sensitive to hyperparameter choices.
We also note that $\eta$ for DANN is a bit sensitive and may need attention in real applications.
We add more sensitivity analysis with two variables, i.e. $\eta$ and $\gamma$, in \figurename~\ref{fig:img_sens-two-var}. 
From the figure, we can see that ours performs better than ERM ($77\%$) in a wide range. 
Different combinations can lead to different performances but too large $\eta$ leads to worse performance.
 
\subsection{Time complexity}

For computational demands, \method consumes similar costs to ERM and Mixup. 
For behavior under varied conditions, we have provided some comparisons in \tableautorefname~\ref{tab-app-time-cost}, and we will emphasize the stable performance of \method compared to other methods. The following table shows the time costs of different methods. 
We can see that Mixup even costs more time \method since it performs mix operations in the input space.
\begin{table}[htbp]
\centering
\caption{Time Complexity.}
\label{tab-app-time-cost}
\resizebox{0.4\textwidth}{!}{
\begin{tabular}{llll}\toprule
Methods  & ERM  & Mixup & FIXED \\ \midrule
Time (s) & 4776 & 5522  & 4940 \\ \bottomrule
\end{tabular}}
\end{table}

\end{document}